\newcommand{\RCNN}{\mathsf{RCNN}}
\newcommand{\mem}{\mathsf{mem}}
\newcommand{\head}{\mathsf{head}}
\newcommand{\ConvLayer}{\mathsf{ConvLayer}}
\newcommand{\inp}{\mathsf{in}}
\newcommand{\rc}{\mathsf{rc}}
\newcommand{\Conv}{\mathsf{Conv2D}}
\newcommand{\Cin}{C_\mathrm{in}}
\newcommand{\Cout}{C_\mathrm{out}}
\newcommand{\din}{{d_\mathrm{in}}}
\newcommand{\dout}{{d_\mathrm{out}}}
\newcommand{\fnet}{{f_\mathrm{net}}}
\newcommand{\bitExtract}{\mathsf{val}}
\newcommand{\bitValue}{\mathsf{state}}
\newcommand{\msg}{\mathsf{msg}}
\newcommand{\patch}{\mathsf{grid}}
\title{Recurrent Convolutional Neural Networks Learn \\ Succinct Learning Algorithms}
\author{Surbhi Goel$^{1*}$ \quad Sham Kakade$^{2}$\footnote{Work performed while at Microsoft Research.} \quad Adam Tauman Kalai$^{3}$ \quad Cyril Zhang$^{3}$ \vspace{3mm}\\ \small{$^{1}$University of Pennsylvania
\qquad $^{2}$Harvard University
\qquad $^{2}$Microsoft Research}
\vspace{3mm}
\\
\small{$\texttt{surbhig@cis.upenn.edu}, \qquad \texttt{sham@seas.harvard.edu}$} \\ \small{$\texttt{adam@kal.ai}$, \qquad $\texttt{cyrilzhang@microsoft.com}$}
}
\date{}
\begin{document}

\maketitle

\begin{abstract}
Neural networks (NNs) struggle to efficiently solve certain problems, such as learning parities, even when there are simple learning algorithms for those problems. Can NNs discover learning algorithms on their own? We exhibit a NN architecture that, in polynomial time, learns as well as any efficient learning algorithm describable by a constant-sized program. For example, on parity problems, the NN learns as well as Gaussian elimination, an efficient algorithm that can be succinctly described. Our architecture combines both recurrent weight sharing between layers and convolutional weight sharing to reduce the number of \textit{parameters} down to a constant, even though the network itself may have trillions of nodes. While in practice the constants in our analysis are too large to be directly meaningful, our work suggests that the synergy of Recurrent and Convolutional NNs (RCNNs) may be more natural and powerful than either alone, particularly for concisely parameterizing discrete algorithms. 
\end{abstract}
\section{Introduction}
\label{sec:intro}
Neural networks (NNs) can seem magical in what they can learn. Yet, humans have designed simple learning algorithms, even for binary classification, which they cannot match. A well-known example is the class of parity functions over the $d$-dimensional hypercube, i.e., $d$-bit strings. In that problem, there is an unknown subset $S$ of the $d$ bits, and the label of each example $x$ is 1 if $x$ has an odd number of 1's in $S$. While gradient-based learning struggles to learn parity functions \citep{kearns1993cryptographic} even over uniformly random $x$, row reduction (i.e. Gaussian elimination) can be used to find $S$ using only $O(d)$ examples and $O(d^2)$ runtime.

A tantalizing question is whether a NN \textit{can discover an efficient learning algorithm itself}, thereby learning classes such as parities. We refer to this as \textit{Turing-optimality}, since algorithms can be described by Turing machines. More specifically, we will give an example of a simple NN architecture that achieves Turing-optimality. In particular, this is the first NN architecture that provably discovers a efficient parity learning algorithm in polynomial time. The parity learning algorithm is efficient, like row reduction, requiring $O(d)$ examples and $O(d^2)$ runtime. 
Our learning architecture would be quite simple to describe with a modern library such as PyTorch. However, we do not expect our specific architecture to be especially good in practice, as the constants in our analysis are much too large to be practical. Nonetheless, it does suggest that the ingredients used in the architecture, especially the combination of recurrent weight-sharing across layers and convolutional weight-sharing within layers, may be useful in designing practical architectures for NNs to learn algorithms.

\begin{figure*}
    \centering
    \includegraphics[width=0.25\linewidth]{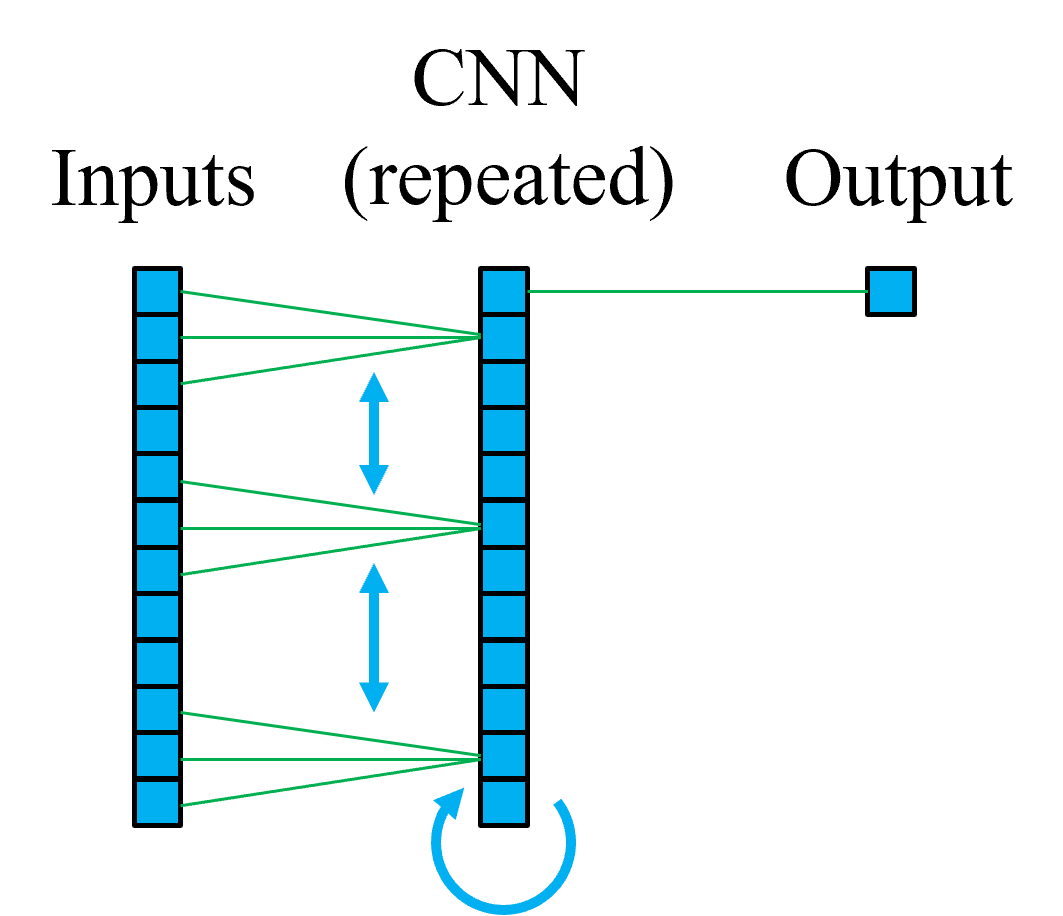}
    \includegraphics[width=0.73\linewidth]{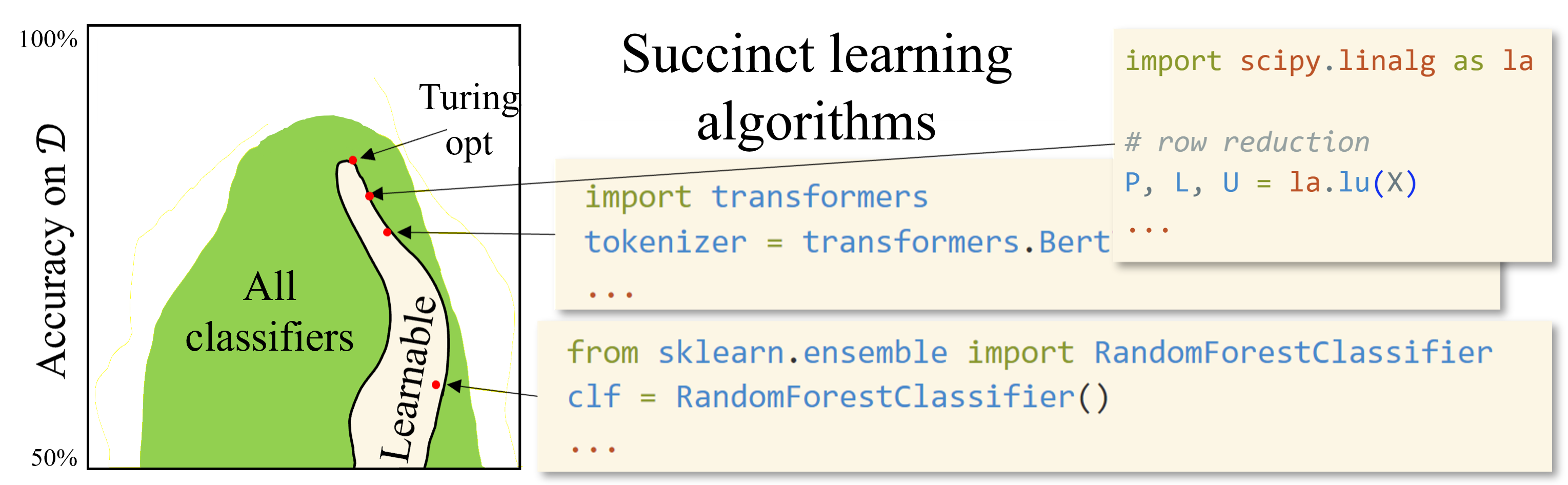}
    \caption{\textit{(Left)} A simple RCNN, in which one small set of parameters is repeated \emph{within each layer} and \emph{across layers}. \textit{(Right)} A Turing-optimal algorithm must output a classifier that is as accurate, on future examples from $\mathcal{D}$, as that which is output by any other succinct efficient learning algorithm. Such classifiers are only a tiny subset of the set of all classifiers, some of which may be more accurate but cannot even be stored in a computer. Moreover, even competing with the best poly-sized NN is intractable, assuming the existence of one-way functions \citep{kv94}; it is not possible for any learning algorithm to efficiently compete with the region depicted in green.}
    \label{fig:fig1}
\end{figure*}

Figure \ref{fig:fig1} illustrates the difference between \textit{classifiers}, such as NNs, and the learning \textit{algorithms} that learn their parameters, such as Stochastic Gradient Descent (SGD) with a given architecture (we use the term \textit{architecture} broadly to include other algorithmic features including learning random initialization, learning rate schedule, restarts, and hyperparameter search). As \textit{classifiers}, two-layer NNs can compute any Boolean function on $d$ binary inputs, including parity functions. However, it is unclear whether these architectures can learn such functions efficiently using gradient-based approaches without any priors encoded in the architectures.

More formally, a Turing-optimal learning algorithm is one which learns as well as any bounded learning algorithm, specifically a constant-sized Turing machine that outputs a binary classifier in polynomial time, such as row reduction for parity learning. Our key contribution is a simple recurrent convolutional (RCNN) architecture which combines recurrent weight-sharing across layers and convolutional weight-sharing within each layer. The number of weights in the convolutional filter can be very few, even a constant, but these weights can determine the activations of a very wide and deep network. We show that any algorithm $A$ represented by a constant-sized TM has a corresponding constant-sized convolutional filter for which the RCNN computes the same function as $A$. Because the convolutional filter is constant-sized, with constant probability random initialization will find it (or something even better, assuming our reduction is not optimal). Thus, using a validation set and random restarts, the RCNN will find a filter which performs as well as $A$, with high probability.

Unfortunately, the above argument would apply to an RCNN architecture that strangely takes the \textit{entire training set} as input at once, and outputs a classifier. Fortunately,  \citet{abbe2020universality} show how to use a few additional simple NN components and SGD updates to memorize relevant information in the weights of these components. Similarly, we add a few extra non-convolutional layers to our architecture so that it can be learned ``normally'' with SGD rather than requiring the entire dataset at once. Fortunately, the implementation of this functionality is compatible with the RCNN with only a constant overhead in terms of size.

\paragraph{Probably Algorithmically Optimal (PAO) learning.} 
To define Turing-optimality, it is convenient to formalize a weaker requirement than PAC learning which we call PAO learning, that in some sense turns PAC learning on its head. Rather than requiring optimality among the space of all classifiers $c \in \calC$, it requires only optimality compared to classifiers output by learning algorithms $A \in \setA$. In many cases $|\calC| \gg |\setA|$, as $\log |\calC|$ and $\log |\setA|$ are the number of bits required to encode the parameters (modern models approaching terabytes) and the learning program source code (e.g., kilobytes), respectively (see \cite{arora2021rip} for a more detailed discussion). In these cases, matching the performance of the classifier output by the best learning algorithm (within some family) may be more reasonable matching the performance of the best overall classifier. 
Turing-optimality is the special case of $\setA$ consisting of the set of succinct programs, specifically constant-sized time-bounded Turing machines. In Section \ref{sec:discussion}, we also discuss how this approach can be used across multiple problems to discover a learning algorithm that can be reused on future problems, so the search need not be repeated for each learning problem.

Like any asymptotic notion, Turing-optimality does not guarantee efficient learning. Just as a polynomial-time algorithm is not guaranteed to be faster than an exponential-time algorithm on inputs of interest, such notions can still provide a useful lens to understand algorithms. If learning algorithm $A_1$ is Turing-optimal and $A_2$ is not, then $A_1$ can nearly match (or exceed) the performance of $A_2$ on any distribution, with polynomial overhead. Data distributions where $A_2$ requires super-polynomially resources to match $A_1$'s performance would need to be examined to see if they are important. We show that a Recurrent Convolutional Neural Network (RCNN) architecture, with random initialization, is Turing-optimal. The contribution of this work is showing that a definition of Turing-optimality is achievable by a simple NN architecture. In future work, it would be interesting to better understand which other combinations of architectures, initializations, and learning rates are Turing-optimal.

\subsection{Related work}

We review some prior lines of work which establish or use other notions of computational universality. We note that most of the notions defined in these works apply to representations rather than algorithms.

\paragraph{Universal function approximation.}
The first line of work relevant to our results is the basic theory of universal function approximation, which quantifies the ability to fit any sufficiently well-behaved function for neural networks \citep{hornik1989multilayer,cybenko1989approximation,funahashi1989approximate}, nearest neighbors \citep{devroye1994strong} and SVMs with RBF kernel \citep{wang2004rbf}.
However, they lack statistical insight, e.g., lookup tables are universal over $\calX=\{-1,1\}^d$ but offer little statistical power.
Further refinements \citep{barron1993universal,barron1994approximation,lee2017ability} consider Fourier-analytic criteria for functions to be representable by smaller neural networks. The goal of subsequent lines of work described in this section, as well as the present work, is to investigate the \emph{computationally efficient} approximation of functions---in other words, the ability of neural networks to emulate efficient learning algorithms.

\paragraph{Turing-completeness of neural architectures.}
\citet{siegelmann1995computational} establish that recurrent neural networks are Turing-complete, using a trick to store the entire TM tape in a single rational number, therefore requiring an extreme amount of bit precision. More recently, \cite{graves2014neural} construct a differentiable TM-inspired architecture. A number of recent works establish Turing-completeness (a classical and weaker notion) for variants of the Transformer architecture \citep{dehghani2018universal,yun2019transformers,bhattamishra2020ability,bhattamishra2020computational}, motivated by empirical advances in discrete reasoning tasks found in natural language processing, theorem proving, and program synthesis. Recently, \cite{wei2021statistically} propose a notion of statistically meaningful (SM) approximation which requires the approximation to be statistically learnable as well. They show that Transformer architectures can ``SM-approximate'' time bounded TMs with sample complexity logarithmic in the time. Unlike our notion, Turing-completeness does not take computational efficiency into account. 

\paragraph{Enumerative program search.} 
A folklore argument, similar to Levin's classic universal search \citep{Levin73}, states that one can achieve Turing-optimality by enumerating all Turing machines of a fixed size, run them all on a training set, and choose the one which performs best on a validation set. The algorithm, however, is also completely infeasible in normal programming languages because the probability of even generating a single program that compiles is minuscule. 

\paragraph{Efficient universality of deep learning.}
Most closely related to our work is that of \citet{abbe2020universality}, which shows how, given any circuit $C$, e.g., encoding a learning algorithm for parity, one can initialize the weights of a NN so that it emulates $C$ when the NN is trained by SGD. This emulator requires $C$ to be given as input. Now, row reduction, like any polynomial-time algorithm, can be converted to a circuit $C$. However, the size of this circuit is polynomial in the runtime of the algorithm. This is why $C$ is required as input, e.g., one has no hope of discovering the Gaussian elimination algorithm by random initialization as its probability would be exponentially small in the dataset size. Thus, their algorithm does not ``discover'' the learning algorithm itself--it is hard-coded into the network. As they discuss, they could encode in the circuit $C$ an enumerative program search, but this is also a parity learning algorithm that needs to be encoded into the network (and is in fact significantly more involved to encode as a circuit).
Their work was recently extended to mini-batch SGD by \citet{abbe2021power}, and it would be interesting to see if our result could be similarly extended.

\section{Preliminaries}
\label{sec:prelims}

For simplicity, we focus on binary classification with $\calY\defeq \{-1,1\}$. For domain $\calX$, a (binary) \textit{classifier} is a function $c: \calX \rightarrow \calY$. For any distribution $\calD$ over $\calX \times \calY$, the \textit{error} of $c$ is,
$\err_{\calD}(c)\defeq \Pr_{(x, y) \sim \calD}\left[c(x) \neq y\right] \in [0,1].$
A learning algorithm takes as input $m \ge 1$ labeled training examples in $(\calX \times \calY)^+=\bigcup_{m \ge 1} (\calX \times \calY)^m$ and outputs a classifier. For further simplicity, we focus on data on the hypercube $\calX_d \defeq \{-1,1\}^d$. The powers of 2 less than 1 are denoted by $2^{-\nats}=\{2^{-i}\mid i \in \nats\}$. We say an algorithm is \textit{poly-time} if it runs in time polynomial in its input length, which is $\poly(dm)$ for a learning algorithm when run on $m$ examples in $d$ dimensions. 

\subsection{Turing machines, circuits, and efficient computability}

Since our main results require the simulation of an arbitrary efficient learning algorithm, we will need to establish formal notation for relevant concepts from the theory of computation. Various notions of computational efficiency may be used. To be concrete, we may use a $2$-tape Turing Machine (TM) where the input is on the first tape and the 2nd tape is used for computation (e.g. see~\citet{hopcroft2001introduction} for a standard reference on Turing machines).   %

One issue that complicates runtime analysis of learning algorithms is that a classifier may be very slow to evaluate, even if the learning algorithm is fast.\footnote{Natural examples where inference is more computationally expensive than learning arise in nonparametric models such as nearest-neighbors or Gaussian processes.}  There are two solutions to this issue, which are equivalent up to polynomial time. The first is learning algorithms that output classifiers, which we represent as Boolean circuits. Circuits circumvent this technicality because they can be evaluated in time nearly linear in the time it takes to output them. Thus time spent on classification is folded into training time. Moreover, any binary classifier on $\calX_d$ can be represented as a circuit, and it is straightforward to convert a NN to a circuit with linear blowup. It is also well-known that other universal representations such as (time-bounded) TMs can be converted to Boolean circuits in polynomial time using unrolling. Other succinct representations could be used, but this choice simplifies runtime analysis. \looseness=-1

Formally, we assume that each classifier output by a learning algorithm $c: \calX_d \rightarrow \calY$ is represented as Boolean circuit, with \texttt{False} representing $-1$ and \texttt{True} representing 1. If the output of the learner is not a valid circuit classifier, then by default we assume it classifies everything as 1. We also consider learners that can be simulated by a TM with size $\le s$ in time $t$ using only $m$ labeled examples.
\begin{definition}[$(s, m, t)$-bounded learner]
A learner $A$ is a $s$-bounded learner which outputs a classifier circuit in at most $t$ steps on any dataset consisting of at most $m$ labeled examples.
\end{definition}

\subsection{Components of deep learning}
In this section, we establish some notation for the building blocks of common deep learning pipelines.

\paragraph{Feedforward layers.} A fully-connected feedforward layer $\RR^\din \rightarrow \RR^\dout$, with activation function $\sigma : \RR \rightarrow \RR$ is parameterized by a matrix $W \in \RR^{\dout \times \din}$ and bias $b \in \RR^{\dout}$, specifying the map $x \mapsto \sigma(Wx + b)$, where $\sigma(\cdot)$ is applied entrywise. A feedforward network is the iterative composition of feedforward layers, possibly omitting an application of $\sigma$ at the final layer.

\paragraph{Convolutional layers.}
Our main construction will apply the same constant depth feedforward network repeatedly to each $3 \times 3$ patch of a $2$-dimensional ``image''. This can be viewed as applying multi-channel convolutional layers followed by non-linear activation consecutively. Due to weight sharing across patches, the number of parameters do not depend on the input dimension but rather on the patch dimension and the number of channels. Often in practice, to ensure same output dimension as input, it is common to add a constant padding (say $p$) around the boundaries. This is crucial for our construction. More formally, a convolutional layer specifies a $k \times k$ patch-wise linear maps from $\RR^{k \times k \times \Cin}$ to $\RR^{\Cout}$; in particular, when $k = 1$, a convolutional layer specifies a \emph{pixel-wise} linear map from $\RR^{\Cin}$ to $\RR^{\Cout}$. We let $\Conv$ be the application of the linear maps extended to the entire input. We overload $\Conv$ to also allow for patch-wise fully-connected feedforward layers.

\paragraph{Recurrent layers.} Finally, our construction will use recurrent weight sharing: for a function $f :  \Zcal \times \Theta \rightarrow \Zcal$ and a positive integer $L$, we use $f^{(L)} : \Zcal \times \Theta \rightarrow \Zcal$ to denote the $L$-times iterated composition of $f$, sharing the parameters $\theta \in \Theta$ between iterations; for example,
\[f^{(3)}(X; \theta) := f(f(f(X; \theta); \theta); \theta).\]

\paragraph{The training pipeline: SGD with random initialization.} Finally, we establish some notation for stochastic gradient descent, whose variants form the predominant class of methods for training neural networks. Given a continuously differentiable\footnote{It is routine to extend these definitions to continuous functions which are piecewise continuously differentiable, such as neural networks with ReLU activations. We omit the details in this paper, as our constructions will never evaluate a gradient of $f$ at a discontinuity.} loss function $\ell : \Ycal \times \Ycal \rightarrow \RR$ and continuously differentiable function $f : \Xcal \times \Theta \rightarrow \Ycal$ where $\Theta = \RR^d$, a step of stochastic gradient descent (SGD) on a single example $(x, y) \in \Xcal \times \Ycal$, with learning rate $\eta \in \RR$, maps the current iterate $\theta$ to
\begin{equation*}
\label{eq:sgd}
\theta' := \theta - \eta \nabla_\theta \ell(f(x, \theta), y).
\end{equation*}
SGD on a sequence of examples $\{(x_t, y_t)\}_{t=1}^T$ is defined by applying this recurrence iteratively from an initialization $\theta_0$ (usually selected randomly from a specified distribution), giving a sequence of iterates $\{\theta_t\}_{t=1}^T$.
It is routine to specify a subset $S \subseteq [d]$ of the parameters to be optimized; in this case, the parameters in $S$ are updated according to the above equation, while the rest are unchanged.
\section{Algorithm learning and Turing-optimality}
\label{sec:turing-opt}

In this section, we adopt a model of learning which turns PAC learning upside down. A \textit{learning algorithm} is a function $A$ that, for any $d,m \ge 1$, outputs a classifier $A(Z): \calX_d \rightarrow \calY$ for any dataset $Z = \{(x^{(i)}, y^{(i)})\}_{i=1}^m \in (\calX_d \times \calY)^m$ of $m \geq 1$ labeled $d$-dimensional examples. Recall that $\calX_d = \{-1,1\}^d$ and $\calY=\{-1,1\}$.

The following definition captures efficient learnability of a class of learning algorithms $A$. The run-time of the algorithm is required to be polynomial in its input size $\poly(dm)$. An important feature of this definition is that it requires the number of examples to be polynomial in the dimension $d$, avoiding the curse of dimensionality. Since we will soon consider $\eps,\delta$ as inputs, we consider only powers of 2 to avoid having to represent arbitrary real numbers.

\begin{definition}[PAO-learner]
Poly-time learning algorithm $A$ is a Probably Algorithmically Optimal (PAO) learner for family $\setA$ if there is a polynomial $p$ such that for any $\eps,\delta \in 2^{-\nats}$, for any $d \ge 1$, any distribution $\calD$ over $\calX_d \times \calY$, and any dataset sizes $m \ge 1, M \geq p(d,m,1/\eps, 1/\delta)$,
\[\Pr_{Z\sim \calD^m, Z'\sim \calD^M}\left[\err_\calD(A(Z;Z')) \le \min_{B \in \setA} \err_\calD(B(Z)) + \eps \right] \ge 1-\delta,\]
where $Z;Z'$ is the concatenation of the two datasets $Z,Z'$. We further assume that $d, m$ and $M$ can be determined from the PAO-learner's input.
\end{definition}

We now observe that one can equivalently design a learning algorithm that has $\eps, \delta>0$ as inputs.

\begin{restatable}[$\eps,\delta$-PAO-learner reduction]{observation}{obsEpsdelta}
\label{obs:epsdelta}
Let $A_{\eps,\delta}$ be an ``$\eps,\delta$-PAO  learner'' for $\setA$ meaning that it is a poly-time learning algorithm that takes additional inputs $\eps, \delta$, and there exists some constant $k$ such that: for any $\eps,\delta \in 2^{-\nats}$, any dataset sizes $m \ge 1, M \ge (dm/\eps\delta)^k$, 
\[\Pr_{Z\sim \calD^m, Z'\sim \calD^M}\left[\err_\calD(A_{\eps,\delta}(Z;Z')) \le \min_{B \in \setA} \err_\calD(B(Z))+\eps\right] \ge 1-\delta.\]
Then, for $r=2^{\lfloor -\frac{1}{3k} \log M\rfloor }$, $A_{r,r}$ is a PAO-learner for $\setA$.
\end{restatable}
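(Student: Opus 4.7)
The plan is to show that the reduction works by verifying the two things required of a PAO-learner: (i) polynomial running time, and (ii) the existence of a polynomial $p(d,m,1/\eps,1/\delta)$ governing the sample complexity of $M$. Since $A_{\eps,\delta}$ is poly-time and $r$ is computable from the input length in polynomial time, item (i) is immediate: $A_{r,r}$ simply reads off $M$ from its input, computes $r$, and invokes $A_{r,r}$.

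For item (ii), I would first use the definition of $r$ to get two-sided bounds. From $r = 2^{\lfloor -\frac{1}{3k}\log M\rfloor}$ we have $M^{-1/(3k)}/2 \le r \le M^{-1/(3k)}$, equivalently $M^{1/(3k)} \le 1/r \le 2M^{1/(3k)}$. Next I would verify that when $M$ is sufficiently large as a polynomial in $d,m$, the hypothesis $M \ge (dm/(\eps'\delta'))^k$ of the $\eps,\delta$-PAO guarantee holds with $\eps'=\delta'=r$. Substituting yields
\[
(dm/r^2)^k \le (dm)^k\cdot (2M^{1/(3k)})^{2k} = 2^{2k}(dm)^k\, M^{2/3},
\]
so the condition $(dm/r^2)^k \le M$ is equivalent to $M^{1/3} \ge 2^{2k}(dm)^k$, i.e., $M \ge 2^{6k}(dm)^{3k}$. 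Under this bound, the $\eps,\delta$-PAO guarantee of $A_{r,r}$ fires: with probability at least $1-r$ over $Z\sim\calD^m, Z'\sim\calD^M$, the output classifier has error at most $\min_{B\in\setA}\err_\calD(B(Z))+r$.

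To finish, I would upgrade the $r$-error, $r$-confidence statement to the desired $\eps,\delta$ one. Since $r\le M^{-1/(3k)}$, we have $r\le\eps$ whenever $M\ge \eps^{-3k}$, and similarly $r\le\delta$ whenever $M\ge \delta^{-3k}$. Taking
\[
p(d,m,1/\eps,1/\delta) \;\defeq\; \max\!\Bigl\{2^{6k}(dm)^{3k},\;\eps^{-3k},\;\delta^{-3k}\Bigr\},
\]
which is a polynomial in its arguments, ensures simultaneously that the $\eps,\delta$-PAO hypothesis of $A_{r,r}$ is satisfied and that $r\le\min(\eps,\delta)$. Consequently, for any $\eps,\delta\in 2^{-\nats}$ and any $M\ge p(d,m,1/\eps,1/\delta)$,
\[
\Pr\!\left[\err_\calD(A_{r,r}(Z;Z')) \le \min_{B\in\setA}\err_\calD(B(Z))+\eps\right] \;\ge\; 1-\delta,
\]
which is exactly the PAO-learner condition.

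The only place a subtlety arises is the balancing act in the exponent of $r$: we need $1/r$ small enough that $(dm/r^2)^k\le M$ (to use the $\eps,\delta$-guarantee), but simultaneously $r$ itself small enough, as a polynomial function of $M$, that it eventually undercuts any prescribed $\eps$ and $\delta$. The choice $r=\Theta(M^{-1/(3k)})$ is the natural one because it makes $(dm/r^2)^k$ scale like $M^{2/3}$ in $M$, leaving a full $M^{1/3}$ factor of slack to absorb the $(dm)^k$ term; any exponent strictly smaller than $1/(2k)$ would work, and the floored power of $2$ is only used to stay inside $2^{-\nats}$ as required by the definition.
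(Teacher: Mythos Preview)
Your argument is correct and follows essentially the same route as the paper's proof: both use the two-sided estimate $M^{-1/(3k)}/2 \le r \le M^{-1/(3k)}$, verify $(dm/r^2)^k \lesssim M^{2/3}\cdot(dm)^k \le M$ for $M$ polynomially large in $dm$, and then use $r \le M^{-1/(3k)}$ to force $r \le \min(\eps,\delta)$. The only cosmetic difference is the form of the polynomial: the paper takes $p = (\eps\delta)^{-3k} + (4dm)^{3k}$ whereas you take a max of three terms, but both choices serve the same purpose.
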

The proof is straightforward and can be found in Appendix \ref{ap:topt}. The $M \ge (dm/\eps\delta)^k$ requirement is a convenient equivalent to a polynomial bound $M \ge p(d,m,1/\eps, 1/\delta)$. 

Although we only analyze PAO learning for the family $\setA$ of bounded Turing machines, it can be analyzed even for continuous classes $A$. For instance, it would be straightforward to show that grid search can PAO learn a constant number of bounded hyperparameters of a given algorithm if the algorithm's error is Lipschitz continuous in those hyperparameters, using a separate validation set to choose the best hyperparameters. PAO learning, as defined, does not specify how classifiers are represented, and could apply to any classifier representation. Recall that we represent classifiers by Boolean circuits as discussed in Section \ref{sec:prelims}. \looseness=-1

We next define Turing-optimal learners, which are PAO-learners for the class of bounded TMs (constant size, run in polynomial time, and output a circuit classifier). 

\begin{definition}[Turing-optimal]
Fix constants $s, k \in \nats$. Let the set $\calB_{s,k}$ be the set of Turing machines which have $\le s$ states and, run in time $\le (2dm)^k$ on a dataset $Z \in (\calX_d \times \calY)^m$ and output a circuit. Learning algorithm $A$ is $(s,k)$-\textit{Turing-optimal} if $A$ PAO-learns (or equivalently $\eps,\delta$-PAO learns) $\calB_{s, k}$. Learning algorithm $A$ is \textit{Turing-optimal} if $A$ is $(s,k)$-Turing-optimal for all constants $s, k \in \nats$. 
\end{definition}

Note that a Turing-optimal learner $A$ must run in poly-time, but the number of examples required to learn each $\calB_{s,k}$ can be different, i.e., it will learn $\calB_{s,k}$ using $M\ge (dm/\eps\delta)^{e_{sk}}$ additional examples, for a different constant $e_{sk}$ for each $s$ and $k$. Similar to Observation \ref{obs:epsdelta}, a Turing-optimal learner can be constructed from an $(s,k)$-Turing optimal learner. The claim below, together with Observation \ref{obs:epsdelta}, imply that a $(\eps,\delta,s,k)$-Turing-optimal learner can be converted to a Turing optimal learner. Algorithm \ref{alg:relax} and its proof are presented in Appendix \ref{ap:topt}.

\begin{restatable}[$(s,k)$-Turing-optimal reduction]{claim}{claimReduction}
\label{claim:reduction}
Let $A_{s,k}$ be an algorithm that takes inputs $s,k$ and is $(s,k)$-Turing optimal for each pair of constants $s, k \in \nats$. Then, Algorithm \ref{alg:relax}$(A_{s,k})$ is Turing-optimal. 
\end{restatable}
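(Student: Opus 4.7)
The plan is for Algorithm~\ref{alg:relax} to enumerate a slowly growing set of candidate parameters $(s,k)$, invoke each $A_{s,k}$ on a share of the data, and use a held-out validation subset of $Z'$ to pick the best resulting classifier. This is a standard parameter-free wrapper: because Turing-optimality is an asymptotic guarantee, any enumeration that eventually includes every fixed constant pair $(s^*,k^*)$ suffices, and the validation step lets us pay only logarithmically for the cost of search.

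Concretely, given $Z;Z'$ with $|Z|=m$ and $|Z'|=M$, the algorithm picks a slowly growing cutoff $N = N(M)$, e.g.\ $N=\lfloor \log\log M \rfloor$, and splits $Z'$ into $N^2$ training-augmentation chunks $\{Z'_{s,k}\}_{s,k \le N}$ together with a validation set $V$ of size, say, $|V|=M/2$. For each $(s,k) \in \{1,\dots,N\}^2$, it runs $A_{s,k}$ on input $Z;Z'_{s,k}$ with a uniform time budget $T = \poly(dm,M)$, keeping the returned classifier $c_{s,k}$ if it terminates in time and discarding otherwise. Finally, it outputs the surviving classifier with smallest empirical error on $V$.

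To prove Turing-optimality, fix any constants $s^*, k^* \in \nats$ and any target $\eps, \delta \in 2^{-\nats}$; by Observation~\ref{obs:epsdelta} it suffices to exhibit an $\eps,\delta$-PAO-learning guarantee for $\calB_{s^*,k^*}$ with $M \ge \poly(d,m,1/\eps,1/\delta)$. For all sufficiently large $M$, (i) $(s^*,k^*)$ lies inside the enumeration since $N(M) \to \infty$; (ii) the per-chunk data size and time budget exceed the (constant-degree, but $(s^*,k^*)$-dependent) polynomial requirements of $A_{s^*,k^*}$ to achieve accuracy $\eps/2$ and failure probability $\delta/3$ against $\calB_{s^*,k^*}$, so with probability $\ge 1-\delta/3$ the candidate $c_{s^*,k^*}$ satisfies $\err_\calD(c_{s^*,k^*}) \le \min_{B \in \calB_{s^*,k^*}}\err_\calD(B(Z)) + \eps/2$; and (iii) by Hoeffding's inequality and a union bound over the at most $N^2$ surviving candidates, with probability $\ge 1-\delta/3$ the empirical errors on $V$ approximate the true errors to within $\eps/4$, since $|V| \gtrsim \log(N^2/\delta)/\eps^2$ is polynomial in $d,m,1/\eps,1/\delta$. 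Picking the empirically best candidate therefore yields a classifier whose true error is at most $\min_{B \in \calB_{s^*,k^*}} \err_\calD(B(Z)) + \eps$ with probability $\ge 1-\delta$, as required.

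The main obstacle is calibrating the enumeration cutoff $N(M)$ and the per-chunk time budget $T$ so that every fixed pair $(s^*,k^*)$ is \emph{eventually} included and given \emph{enough} time and data, while the \emph{total} runtime $N^2 \cdot T$ stays polynomial in the input size $dm+M$. This is resolved by letting $N$ grow arbitrarily slowly, e.g.\ as $\log\log M$, so that $N^2$ is sub-logarithmic, and by relying on the asymptotic nature of PAO guarantees to absorb the finite ``warm-up'' period before $N$ exceeds $\max(s^*,k^*)$ and before the uniform budget $T$ dominates the polynomial runtime of $A_{s^*,k^*}$. No single polynomial needs to bound all $A_{s,k}$ simultaneously; each target class $\calB_{s^*,k^*}$ only needs its own sample-complexity polynomial, which is exactly what the definition of Turing-optimality permits.
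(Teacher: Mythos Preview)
Your wrapper has the right high-level shape (enumerate over $(s,k)$, use a validation set), but there is a genuine gap in the time-budget accounting that your final paragraph glosses over.

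You feed each $A_{s,k}$ a chunk $Z'_{s,k}$ of size $\Theta(M/N^2) = \Theta(M/(\log\log M)^2)$, i.e.\ essentially $\Theta(M)$ examples. Since each $A_{s,k}$ is only promised to be poly-time in its \emph{input length}, with a polynomial degree that may depend on $(s,k)$, the runtime of $A_{s^*,k^*}$ on this input is $q_{s^*,k^*}\bigl(d(m+\Theta(M))\bigr)$, which scales like $(dM)^{e_{s^*,k^*}}$ for some exponent $e_{s^*,k^*}$ you do not control. Your algorithm, however, must fix a single uniform time budget $T = \poly(dm,M)$, say $T = M^c$ for some constant $c$ independent of the target. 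For any target $(s^*,k^*)$ with $e_{s^*,k^*} > c$, the call to $A_{s^*,k^*}$ will be killed by the timeout for \emph{every} sufficiently large $M$, so $c_{s^*,k^*}$ is never produced, and the wrapper fails to be $(s^*,k^*)$-Turing-optimal. The sentence ``No single polynomial needs to bound all $A_{s,k}$ simultaneously'' is correct for sample complexity but is precisely wrong for the runtime issue here: a single polynomial time budget \emph{does} have to bound the target's runtime, and that runtime grows with the amount of data you hand it.

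The paper's Algorithm~\ref{alg:relax} avoids this by also enumerating over the amount of auxiliary data: it sets $r = M^{1/10}$ and tries $c_{s,k,t} = A_{s,k}(Z;Z'[:t])$ for every $t \le r$, with a uniform time limit of $M$. For the target $(s^*,k^*)$, the relevant triple is $(s^*,k^*,t)$ with $t = M' := p(d,m,2/\eps,2/\delta)$, the PAO sample-complexity requirement; on this input $A_{s^*,k^*}$ runs in time $q_{s^*,k^*}(d,m+M')$, a quantity that does \emph{not} grow with $M$, so for $M$ polynomially large it comfortably fits inside the time limit $M$. Your scheme can be repaired the same way: either add a third loop over the data size $t$, or cap the chunk handed to each $A_{s,k}$ at something like $M^{1/10}$ rather than $\Theta(M)$.
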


Finally, it is not difficult to see that a Turing-optimal learner also PAC-learns any concept class $\calC$ that is PAC-learnable. Following standard conventions, the PAC learning algorithm is given target accuracy $\eps$ and failure probability $\delta$ as inputs. Also, say a distribution $\calD$ is said to be \textit{consistent} with set $\calC$ of classifiers if there is some $c \in \calC$ with $\err_\calD(c)=0$. 
\begin{definition}[PAC-learning]\label{def:PAC}
Let $\calC = \bigcup_{d\ge 1} C_d$, where $c: \calX_d \rightarrow \calY$ for each $c \in \calC_d$. oly-time\footnote{The standard PAC learning definition requires the learner to run in time also $q(d,1/\eps,1/\delta)$ for some polynomial $q$, which would admit an algorithm that is not poly-time, e.g., if it uses $m=1$ examples but runs in time $q(d,1/\eps,1/\delta)$. However, such an algorithm can trivially be converted to a poly-time algorithm by padding its input with an additional $q(d,1/\eps,1/\delta)$ examples.} learning algorithm $A_{\eps,\delta}$ PAC-learns $\calC$ if $A_{\eps,\delta}$ and there is a polynomial $p$ such that, for any $\eps, \delta \in 2^{-\nats}$, $d\in \nats$, $m \ge p(d, 1/\eps, 1/\delta)$ and distribution $\calD$ consistent with $\calC_d$:
$$\Pr_{Z \sim \calD^m}\left[\err_\calD(\A_{\eps,\delta}(Z)) \le \eps\right]\ge 1-\delta.$$
\end{definition}
The computational polynomial-time efficiency requirement on $A_{\eps,\delta}$ means that its runtime is polynomial in its input size, $\poly(dm+\log 1/\eps\delta)$, because it takes $O(\log 1/\gamma)$ bits to describe $\gamma \in 2^{-\nats}$.
\begin{restatable}{claim}{claimPAC}
\label{claim:PAC}
Suppose there is some learning algorithm that PAC-learns $\calC$ and suppose that $A$ is a polynomial-time Turing-optimal learner. Then $A$ PAC-learns $\calC$ as well.
\end{restatable}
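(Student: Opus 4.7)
The plan is to build a constant-sized Turing machine that, using the hypothesized PAC-learner $B$ as an internal subroutine, already achieves error $\le \eps/2$ on polynomially many samples; then have $A$ compete with it via Turing-optimality. The main subtlety is that $B$ takes $\eps, \delta$ as inputs, while a TM in $\calB_{s,k}$ is parameter-free and receives only a dataset, so the wrapper must fix $B$'s accuracy and confidence parameters implicitly as a function of the dataset size it observes.

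Concretely, fix a poly-time PAC-learner $B$ for $\calC$ with sample-complexity polynomial $q(d,1/\eps,1/\delta) \le (d+1/\eps+1/\delta)^c$ for some constant $c$. I define a Turing machine $B'$ with fixed description (depending only on $B$ and $c$, so it has some constant number $s$ of states) that, on a dataset $Z$ of $m$ examples in $d$ dimensions: (i) reads $d,m$ from $Z$; (ii) sets $\tau \defeq m^{-1/(2c)}$; (iii) if $q(d,1/\tau,1/\tau) \le m$, runs $B$ with parameters $(\tau,\tau)$ on the first $q(d,1/\tau,1/\tau)$ examples of $Z$ and outputs the resulting circuit; (iv) otherwise outputs a default constant classifier. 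Since $B$ runs in time $\poly(dm+\log(1/\tau))$ with $\log(1/\tau) = O(\log m)$, the total runtime of $B'$ is bounded by $(2dm)^k$ for some constant $k$, so $B' \in \calB_{s,k}$.

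Pick $m_0(d,\eps,\delta) = \poly(d,1/\eps,1/\delta)$ large enough that for $m \ge m_0$ we have both $q(d,1/\tau,1/\tau) \le m$ and $\tau \le \min(\eps/2,\delta/2)$; a routine calculation with the above form of $q$ suffices. For any $\calD$ consistent with $\calC_d$, the realizable PAC guarantee of $B$ then gives
\[
  \Pr_{Z\sim\calD^m}\!\left[\err_\calD(B'(Z)) \le \eps/2\right] \ge 1-\delta/2.
\]
Turing-optimality of $A$ at level $(s,k)$, combined with Observation~\ref{obs:epsdelta} to re-parameterize by $(\eps/2,\delta/2)$, supplies a polynomial $p$ such that for $M \ge p(d,m,2/\eps,2/\delta)$,
\[
  \Pr_{Z,Z'}\!\left[\err_\calD(A(Z;Z')) \le \min_{B'' \in \calB_{s,k}}\err_\calD(B''(Z)) + \eps/2\right] \ge 1-\delta/2.
\]
Using $B' \in \calB_{s,k}$ and a union bound, $\err_\calD(A(Z;Z')) \le \eps$ with probability $\ge 1-\delta$ whenever the input contains at least $m_0 + p(d,m_0,2/\eps,2/\delta) = \poly(d,1/\eps,1/\delta)$ examples. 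Hence the wrapper $A_{\eps,\delta}$ that forms the required split $Z;Z'$ from its input and calls $A$ PAC-learns $\calC$. The only nontrivial obstacle is the construction of the parameter-free $B'$; everything else is an application of the PAO guarantee together with a union bound.
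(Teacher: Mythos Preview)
Your proof is correct and follows essentially the same approach as the paper's: construct a parameter-free wrapper TM that internally calls the PAC learner with accuracy/confidence parameters tied to the observed sample size $m$ (the paper uses $r=m^{-1/3k}$, you use $\tau=m^{-1/(2c)}$), verify the wrapper lies in some $\calB_{s,k}$, and then invoke Turing-optimality plus a union bound. Your treatment is slightly more careful in explicitly handling the fallback case $q(d,1/\tau,1/\tau)>m$, but the argument is the same.
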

We defer the proof to the Appendix \ref{ap:topt}.

\section{Turing-optimality of SGD on randomly initialized RCNNs}
\label{sec:architecture}
In this section, we will design a Turing-optimal leaner in the form of a NN and a corresponding training pipeline. Our NN will be of the form of a RCNN (see Figure \ref{fig:RCNN}) with very few trainable parameters and our training pipeline (see Algorithm \ref{alg:RCNN}) will use random initialization, and random restarts to find good parameters. Let us present our main result.

\begin{algorithm}[tb]
  \caption{SGD on randomly initialized RCNN}
  \label{alg:RCNN}
\begin{algorithmic}
  \STATE {\bfseries Input:} training set $\mathcal{S}:= \{(x^{(i)}, y^{(i)})\}_{i=1}^{m}$, size $s$, initialization set $\mathcal{U}_s$, depth $L$, learning rate $\eta$ 
\STATE Create dummy sample $(x^{(m+1)}, y^{(m+1)}) = (\mathbbm{1}_d, 1)$
  \STATE Initialize $f \in \mathcal{F}_\RCNN^{d+1, m+1, 100 s, 100, L}$ (see Definition \ref{def:RCNN}) with parameters $\Theta^{(1)}_\mem, \Theta_\rc, \Theta_\head$ such that $W^{(1)} = 0$, and all entries of $V_1, V_2, V_3, V_4, V_5, U_1, U_2$ are sampled uniformly from $\mathcal{U}_s$
  \FOR{$i=1$ {\bfseries to} $m+1$}
  \STATE Update parameters in the memory layer:
  \begin{align*}W^{(i+1)} = W^{(i)} - \eta \left.\nabla_{W}\ell\left( f\left(\left[{x^{(i)}}^\top ~ 1\right]^\top
;W, \Theta_\rc, \Theta_\head\right), y^{(i)}\right)\right]_{W = W^{(i)}}\end{align*}
  where $\ell:\RR \times \RR \rightarrow \RR$ is the squared loss, that is, $\ell(\hat{y}, y) = \frac{1}{2}(y - \hat{y})^2$
  \ENDFOR
  \STATE  {\bfseries Output:} function $f(\cdot; W^{(m+2)},  \Theta_\rc, \Theta_\head)$
\end{algorithmic}
\end{algorithm}

\begin{figure*}
    \centering
    \includegraphics[width=0.9\textwidth]{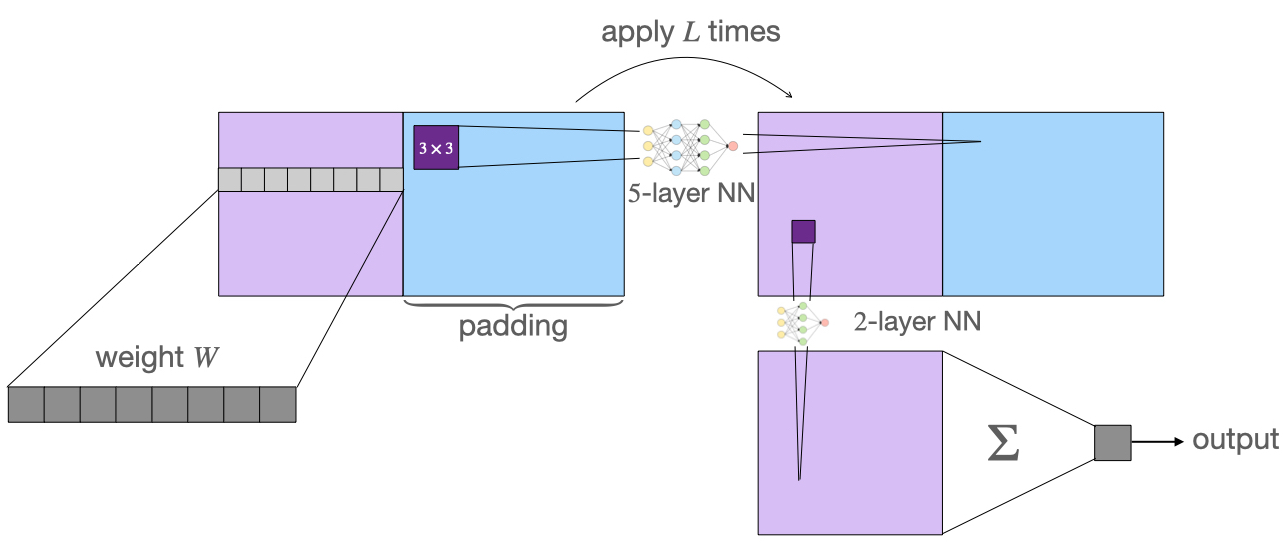}
    \caption{Recurrent convolutional neural network from our construction. A dense weight matrix $W$ is applied to the input to convert it into 2D followed by adding padding of 1s to the right. This is followed by $L$ applications of a $3 \times 3$ convolutional layer where each layer applies a 5-layer NN to each patch. This is followed by a pixel-wise convolutional layer consisting of a 2-layer NN. Lastly, the output corresponding to the main grid is summed.}
    \label{fig:RCNN}
\end{figure*}

\begin{theorem}\label{th:main}
There exists constants $c_1, c_2, c_3>0$ such that the following holds. For any $d, s, m, t\in \nats$, there exists learning rate $\eta \in \RR, L = c_1(t + m + d)$, and $\mathcal{U}_s \subseteq \RR$\footnote{This set can be constructed with knowledge of only $s$.} where: for any probability measure $\calD$ on $\calX_d \times \{-1,1\}$, $(s,m,t)$-computable learner $A$, and training set $\mathcal{S}\in (\calX_d \times \{-1,1\})^m$ drawn i.i.d. from $\calD$, Algorithm \ref{alg:RCNN} returns a function $f$ such that with probability at least $s^{-c_2s^2}$, 
\[
    \err_\calD(f) \le \err_\mathcal{D}(\A,\calS). 
\]
The bit precision required by Algorithm \ref{alg:RCNN} is $\lceil \log(s) \rceil + c_3$.
\end{theorem}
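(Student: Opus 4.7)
The strategy is a two-part construction followed by a discrete "hit-the-right-initialization" probabilistic argument. First, I would show that there exists a fixed choice of convolutional weights $\Theta_\rc$, head weights $\Theta_\head$, and trained memory matrix $W^{(m+2)}$ such that the RCNN $f$ computes exactly the same output as running $A$ on $\calS$ and then classifying the input with $A(\calS)$. Second, I would show that, under the stated discrete initialization distribution $\mathcal{U}_s$, the random draws of $V_1,\dots,V_5,U_1,U_2$ hit that designated set of values with probability at least $s^{-c_2 s^2}$, and that SGD on the memory layer (with zero initialization of $W$) faithfully populates $W^{(m+2)}$ with an encoding of $\calS$.

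\textbf{Step 1: simulate any $(s,m,t)$-bounded TM by a single $3\times 3$ convolutional filter iterated $L$ times.} The pattern-matching computation of one TM transition is a local function of a tape cell, the adjacent cell, and the head/state flag sitting on one of them; this is exactly the kind of "cellular-automaton" update that a $3\times 3$ patch-wise MLP can implement. I would set up the 2D grid so that row~$i$ of the grid represents the TM tape at time $i$, encoding each cell with a constant-length bit vector inside the channel dimension of width $100$. The constant padding of $1$s enforces the boundary condition. The 5-layer patch-wise MLP $V_1,\ldots,V_5$ can implement the TM transition function, which is an arbitrary Boolean table on $O(s)$ bits. Taking $L = c_1(t+m+d)$ gives enough recurrent applications to (i) run SGD-style copying of examples into the grid, (ii) run the $t$-step learner $A$ to produce its circuit classifier $A(\calS)$, and (iii) evaluate that circuit on the test input injected via the first row of the grid. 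The final pixel-wise head $U_1,U_2$ reads off the answer bit and the summation collapses it to a scalar prediction.

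\textbf{Step 2: memorizing $\calS$ in $W$ via SGD.} Following the mechanism used by \citet{abbe2020universality}, I would engineer $\Theta_\rc$ and $\Theta_\head$ so that, when $W=0$, the forward map on input $[x;1]$ is identically $0$. Then for the squared loss $\tfrac{1}{2}(y-\hat y)^2$ with $\hat y=0$, the gradient with respect to $W$ on example $(x^{(i)},y^{(i)})$ is an outer-product-like rank-one term $-y^{(i)}\,g(x^{(i)})\,[x^{(i)\top}\;1]$, where $g$ is a fixed vector determined by $\Theta_\rc,\Theta_\head$. Choosing $\eta$ so that this term is exactly an indicator for row $i$, the resulting $W^{(m+2)}$ literally contains the rows $(x^{(i)},y^{(i)})$ in separate rows. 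The dummy $(m+1)$-th example ensures the last update does not clobber the stored dataset. Then when the trained network is evaluated on a fresh test point $x$, the first hidden representation contains $x$ together with the entire stored dataset $\calS$, which is exactly the input the simulated TM from Step~1 needs.

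\textbf{Step 3: hitting the right initialization.} Because every weight in $V_1,\ldots,V_5,U_1,U_2$ is drawn independently uniformly from the \emph{finite} discrete set $\mathcal{U}_s$, and because the total number of such scalar entries is bounded by a function of $s$ alone (independent of $d,m,t$; roughly $O(s^2)$ from the patch MLP that encodes the TM transition table on $O(s)$ bits), the probability of drawing exactly the target weights constructed in Step~1 is at least $|\mathcal{U}_s|^{-O(s^2)} \ge s^{-c_2 s^2}$. I would choose $\mathcal{U}_s$ to be the set of dyadic rationals of denominator $2^{O(\log s)}$ in a bounded range, which simultaneously (a) contains every value required by the Step~1 construction, (b) yields the bit precision bound $\lceil \log s\rceil + c_3$, and (c) has size polynomial in $s$.

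\textbf{Main obstacle.} The delicate step is Step~2: making sure that the SGD updates on $W$ do not interfere with the convolutional/recurrent computation, and simultaneously that after $m+1$ SGD steps the memory matrix $W^{(m+2)}$ decodes cleanly into the tape initialization that the simulated TM expects. Concretely, I need $f$ to be engineered so that (i) at initialization $W=0$ the forward pass is identically zero so gradients are clean rank-one writes rather than a messy superposition, and (ii) after training, the trained $W$ multiplied by a test input $[x;1]$ produces both the stored dataset and the test point separated into disjoint grid cells for the recurrent TM simulator to consume. Once this interface is correct, the rest is a routine but careful simulation-and-counting argument, and the final error bound $\err_\calD(f)\le\err_\calD(A,\calS)$ follows because, conditional on the good initialization event, $f$ is literally the classifier $A(\calS)$.
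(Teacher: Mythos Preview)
Your high-level strategy---construct target weights $(\Theta_\rc,\Theta_\head)$ that make the RCNN simulate $A$, show SGD writes $\calS$ into $W$, then lower-bound the probability that random initialization hits those weights---is exactly the paper's plan. However, two of your implementation ideas are incompatible with the architecture in Definition~\ref{def:RCNN} and would not go through as written.

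\textbf{TM layout.} You propose that ``row $i$ of the grid represents the TM tape at time $i$'' and that the encoding lives ``inside the channel dimension of width $100$.'' Neither is available. In Definition~\ref{def:RCNN}, the patch map is $f_\Conv:\RR^{3\times 3}\to\RR$, i.e.\ single-channel in and out; the $100s$ appears only as the hidden width of the per-patch MLP, not as a channel you can store state in. Moreover the grid has exactly $d_\mem=m{+}1$ rows, so a space-time diagram with $t$ rows does not fit when $t\gg m$. The paper instead uses the \emph{recurrent depth} $L$ as the time axis (one convolutional application $=$ one TM step on a 2D tape), and packs head/state/phase information into a single scalar per cell using base-$3$ digits with precision $\log s + O(1)$. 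This is also where the bit-precision claim $\lceil\log s\rceil+c_3$ actually comes from; your dyadic-rational $\mathcal{U}_s$ is plausible but you would still need a per-cell scalar encoding, not channels.

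\textbf{Row-targeted memorization.} Your Step~2 writes the SGD update as $-y^{(i)}\,g(x^{(i)})\,[x^{(i)\top}\;1]$ with ``$g$ a fixed vector determined by $\Theta_\rc,\Theta_\head$'' and then asks that this be an indicator for row $i$. But a vector determined only by $\Theta_\rc,\Theta_\head$ and $x^{(i)}$ cannot equal $e_i$: the example carries no index, and two distinct steps could even see the same $x$. The mechanism the paper uses is dynamic: the RCNN inspects the current $W^{(i)}\diag([x;1])$, detects that row $\tau$ is the first all-zero row (via a local message-passing rule comparing each cell to the cell above), sets the output identically to a constant so the loss derivative is $-y^{(i)}$, and---crucially---arranges the \emph{Jacobian through all $L$ layers} to be $1$ on row $\tau$ and $0$ elsewhere. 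Achieving that last point requires constructing the per-patch MLP to match prescribed function values \emph{and} prescribed partial derivatives on the finite input domain (the paper does this with a dedicated representation lemma); your ``forward pass identically zero $\Rightarrow$ clean rank-one writes'' observation gives the $[x^{(i)\top}\;1]$ factor but not the row selector. You correctly flag this interface as the main obstacle, but the resolution you sketch is not the one that works.
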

\noindent\textbf{Remark.}  Our Algorithm \ref{alg:RCNN} sets the learning rate of 0 for the shared weights in the RCNN part of the NN and updates only the dense memory layer. Since our result is constructive, it is entirely possible that a search for an optimal-learning rate may perform better in practice.

Finally, we will use Theorem \ref{th:main} to create a $(s,k)$-Turing optimal learner with the help of random restarts and an additional validation set,
\begin{corollary}\label{cor:glue}
For fixed constants $c_1, c_2> 0$, Algorithm \ref{alg:RCNN} can be converted to an $(\eps, \delta, s,k)$-Turing-optimal learner for any fixed $s, k$, time bound $t=(2dm)^k$ and $\eps, \delta \in (0,1)$ by running it $s^{c_1s}\log (1/\delta)$ times with random restarts, and selecting the classifier that performs best on a validation set of size $c_2 s \log s \log(1/\delta)/\eps^2$. 
\end{corollary}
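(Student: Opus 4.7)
The plan is to combine two standard mechanisms: probability amplification by independent random restarts of Algorithm \ref{alg:RCNN}, and held-out model selection via a fresh validation set. Concretely, the $N = s^{c_1 s} \log(1/\delta)$ restarts produce candidate classifiers $f_1, \dots, f_N$, and I would return $\hat{f} = \argmin_i \err_\calV(f_i)$, where $\calV$ is an iid validation set of size $M_{\mathrm{val}} = c_2 s \log s \log(1/\delta)/\eps^2$. The goal is to establish that $\err_\calD(\hat{f}) \leq \min_{A \in \calB_{s,k}} \err_\calD(A(\calS)) + \eps$ with probability at least $1 - \delta$, which is exactly the $(\eps,\delta,s,k)$-Turing-optimality guarantee.

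For the amplification step, I would fix an arbitrary TM $A \in \calB_{s,k}$ and invoke Theorem \ref{th:main}: each fresh initialization yields a classifier with $\err_\calD(f_i) \leq \err_\calD(A,\calS)$ with probability at least $s^{-c_2 s^2}$, independently across restarts (since the only randomness in Algorithm \ref{alg:RCNN} across restarts is the draw of the initial weights from $\calU_s$). Hence $\Pr[\forall i,\, \err_\calD(f_i) > \err_\calD(A,\calS)] \leq (1 - s^{-c_2 s^2})^N$. Because the data-dependent minimizer $A^\star \in \argmin_{A \in \calB_{s,k}} \err_\calD(A(\calS))$ depends on the random training set, I would union-bound this failure event over the entire finite set $\calB_{s,k}$. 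The number of TMs on at most $s$ states over a fixed tape alphabet is $s^{O(s)}$, so choosing $c_1$ large enough drives the union-bounded failure probability below $\delta/2$; hence with probability $\geq 1 - \delta/2$, some $f_{i^\star}$ satisfies $\err_\calD(f_{i^\star}) \leq \err_\calD(A^\star(\calS))$.

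For the validation step, Hoeffding's inequality together with a union bound over the $N$ candidates yields $\max_i |\err_\calV(f_i) - \err_\calD(f_i)| \leq \eps/2$ with probability $\geq 1 - \delta/2$, as long as $M_{\mathrm{val}} = \Omega((\log N + \log(1/\delta))/\eps^2)$; here $\log N = O(s \log s + \log \log(1/\delta))$, so the stated $M_{\mathrm{val}} = O(s \log s \log(1/\delta)/\eps^2)$ suffices. Combining the two high-probability events via a union bound,
\[
\err_\calD(\hat{f}) \leq \err_\calV(\hat{f}) + \eps/2 \leq \err_\calV(f_{i^\star}) + \eps/2 \leq \err_\calD(f_{i^\star}) + \eps \leq \err_\calD(A^\star(\calS)) + \eps
\]
holds with probability $\geq 1 - \delta$, as required.

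The principal subtlety, and where care is needed, is the quantifier order in the amplification step: Theorem \ref{th:main} is stated for a \emph{fixed} learner $A$, but we must compete against the training-set-dependent best $A^\star$. The remedy is the union bound over the finite set $\calB_{s,k}$ (finite when $s,k$ are treated as fixed constants), which is what inflates the per-restart success probability $s^{-c_2 s^2}$ into the stated bound on $N$. Everything else --- the independence of restart initializations, the Hoeffding tail bound for validation error, and the telescoping inequalities at the end --- is routine.
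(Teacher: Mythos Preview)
Your proof is correct and follows the same approach as the paper's: amplification via independent restarts, Hoeffding with a union bound over the $N$ candidates on the validation set, then the identical telescoping chain $\err_\calD(\hat{f}) \leq \err_\calV(\hat{f}) + \eps/2 \leq \err_\calV(f_{i^\star}) + \eps/2 \leq \err_\calD(f_{i^\star}) + \eps$.

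The one substantive difference is your union bound over $\calB_{s,k}$, which the paper does not perform and which is in fact unnecessary. Theorem~\ref{th:main} (via Lemma~\ref{lem:algo}) holds for \emph{every} pair $(A,\calS)$, with the probability taken only over the random initialization of the RCNN; the training set $\calS$ is fixed before the restarts are drawn. Hence, conditioning on $\calS$, the minimizer $A^\star(\calS)$ is a fixed, deterministic element of $\calB_{s,k}$, and Theorem~\ref{th:main} applies to it directly: each restart succeeds against $A^\star$ with probability at least $s^{-c_2 s^2}$, independently. So your flagged ``principal subtlety'' about quantifier order is not actually a subtlety --- though the extra union bound does no real harm, costing only an additive $O(s\log s)$ inside the logarithm, which is dominated by the existing terms.
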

Claim \ref{claim:reduction} converts this to a Turing-optimal learner. Because our formal definition of Turing-optimality applies only to deterministic circuit classifiers, one must also convert the NN to a circuit and derandomize the algorithm (which can be done using random bits extracted from additional iid random labeled examples \citep{kearns94clt}). 

Next we give a detailed description of the architecture and a proof overview of Theorem \ref{th:main}. The proof of Corollary \ref{cor:glue} and the complete proof of Theorem \ref{th:main} can be found in Appendix \ref{app:const}.

\subsection{Network architecture: RCNN with a memory layer}
Let us first describe the neural network architecture that Algorithm \ref{alg:RCNN} uses in more detail.  The architecture comprises of a dense layer of size linear in $m$ (the number of samples) and $d$ (the dimension of the input) as the first layer. The output of this layer is padded with $1$s on the right, and then fed into a RCNN (recurrent weight-sharing across depth and convolutional weight sharing across width). The RCNN has only $\mathsf{poly}(s)$ shared parameters, due to its recurrent and convolutional nature. These shared parameters are in the form of a 5-layer NN applied as a convolutional layer to $3 \times 3$ patches of the input.\footnote{Note that we did not optimize our construction. It is quite possible to improve this to a shallower network.} This convolutional layer is applied recurrently $L = p(t, d, m)$ times for some polynomial $p$ (with the same parameters) where $t$ is the bound on the runtime of the TM. The final outputs from the RCNN are then passed through a pixel-wise 2-layer NN and finally summed to give a scalar prediction.\looseness=-1
\begin{definition}[RCNN with a memory layer] \label{def:RCNN}
 For $d_\inp, d_\mem, d_\rc, d_\head, l>0$, define the function class $\mathcal{F}_\RCNN^{d_\inp, d_\mem, d_\rc, d_\head, l}$ of RCNNs where each $f \in \mathcal{F}_\RCNN^{d_\inp, d_\mem, d_\rc, d_\head, l}$ is parameterized by memory layer parameters $\Theta_\mem = \{W\}$ with $W \in \RR^{d_\mem\times d_\inp}$, RC layer parameters $\Theta_\rc:= \{V_1, V_2, V_3, V_4, V_5\}$\footnote{For ease of presentation, we hide the biases from the parameters. We can assume that the input is padded with 1 to account for biases.} with $V_1 \in \RR^{d_\rc \times 9}$, $V_2, V_3, V_4 \in \RR^{d_\rc \times d_\rc}, V_5 \in \RR^{1 \times d_\rc}$, and head layer parameters $\Theta_\mathsf{head} = \{U_1, U_2\}$ with $U_1 \in \RR^{d_\head \times 1}, U_2 \in \RR^{1 \times d_\head)}$. The local convolutional operation is denoted by $f_\Conv: \RR^{3 \times 3} \rightarrow \RR$ and is applied on the $3\times3$ grid centered at each coordinate of the $d_\mem \times (d_\inp + l)$ input (with padding $p$ for the edge coordinates). We overload the notation $f_\Conv$ to denote the $d_\mem \times (d_\inp + l)$ output post the local application on the input. 
\begin{align*}
    f&(z; \Theta_\mem, \Theta_\RCNN; \Theta_\head) =\mathsf{sum}\left(f_\head\left(f_{\Conv}^{(l)}\left(f_{\mem}(z; \Theta_\mem); \Theta_\RCNN\right)_{:,:d_\inp}\right); \Theta_\head\right)\\
    &\text{where}~~f_{\mem}(z; \Theta_\mem = \{W\}) = \left[
     \begin{tabular}{cc}
         $W\diag(z)$ & $\mathbbm{1}_{d_\mem \times l}$
    \end{tabular}\right]\\
    &\quad\qquad f_{\head}(z: \theta = \{U_1, U_2\}) = U_1\sigma(U_2 z)\\
    &\quad\qquad f_\Conv(z; \Theta = \{V_1, V_2, V_3, V_4, V_5\}) = V_5\sigma(V_4 V_3 \sigma( V_2 V_1 \mathsf{vec}(z))))).  
\end{align*}
with $\sigma$ being the ReLU activation.
\end{definition}

\paragraph{Remark.} If our architecture did not have both recurrent and convolutional weight sharing, then the number of parameters would have dependence on $d_\mathsf{in}, d_\mem$ and $l$, which depend on $m$, $d$, and $T$.

\subsection{Proof sketch for Theorem \ref{th:main}}
Here we present a proof sketch for Theorem \ref{th:main}.  Our proof follows by construction, that is, we show that for each TM $A$ of size $s$, there exists parameters $\Theta_\rc$ and $\Theta_\head$ that ensure that 
(1) for the first $m+1$ steps, when $\Theta_\mem$ is trained with SGD, the gradients assist with memorizing the training set in the values of $W_\mem$, and (2) given the memorized training set, the RCNN computes the roll-out of $A$ with the input tape having the training set and the test example giving the prediction of $A$ on the test example as the output. We finally show that the parameters $\Theta_\rc$ and $\Theta_\head$ in our construction for all TM $A$ of fixed size and runtime belong to a fixed finite set of size $O(s)$ that can be constructed with knowledge of only $s$. The following lemma summarizes the aforementioned properties:
\begin{lemma}\label{lem:algo}
For any $d, s, m, t\in \nats$, $\delta \in (0,1)$, and any $(s,m,t)$-computable learner $A$, there exists $\Theta_\rc, \Theta_\head$ with each parameter belonging to a fixed set $\mathcal{U}_s$ of size $O(s)$ (that can be constructed with the knowledge of only $s$) such that Algorithm \ref{alg:RCNN} with $L= $ satisfies: 
\begin{enumerate}
    \item \textit{Memorization}: For $1 \le i \le m + 1$, $W^{(i)}_{ab} = 
    \begin{cases} 
        \frac{1}{3}y^{(a)}x^{(a)}_b &\text{if }a \le i, b \le d\\
        \frac{1}{3}y^{(a)} &\text{if }a \le i, b = d + 1\\
        0 & \text{otherwise.}
    \end{cases}$.
    \item \textit{Computation}: For all $x \in \{\pm 1\}^d$, $f\left(\begin{bmatrix}x \\1\end{bmatrix};W^{(m+2)}, \Theta_\rc, \Theta_\head\right) = A(\mathcal{S})[x]$.
\end{enumerate}
\end{lemma}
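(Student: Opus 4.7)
The plan is to prove the lemma by explicit construction of $\Theta_\rc$ and $\Theta_\head$, designed so that a single shared $3\times 3$ convolutional block plays two roles simultaneously: \emph{memorization} during the first $m+1$ SGD steps, and \emph{Turing-machine simulation} on the $(m+2)$-th (prediction) forward pass. These two roles are selected by a local ``first-empty-row'' indicator built from ReLU gates.

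First, I would fix the grid layout. The $d_\mem \times (d_\inp + l)$ grid produced by the memory layer is interpreted as: columns $1,\ldots,d$ form the data region, column $d+1$ the label column, and the remaining $l$ columns form the scratch tape for the TM. Row $a$ (for $a\le m$) is reserved for the $a$-th training example, while row $m+1$ carries the test query that enters through $\diag(z)$ on the prediction pass.

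Second, I would design the 5-layer patch NN $f_\Conv$ so that, on input where the first $i-1$ rows of $W$ are already populated and the remaining rows are zero, the RCNN output satisfies:
\begin{enumerate}
    \item $\hat y^{(i)} = 0$, so that $\partial\ell/\partial\hat y = \hat y - y^{(i)} = -y^{(i)}$;
    \item the backward Jacobian $\partial\hat y/\partial[W\diag(z)]_{a,b}$ equals a constant $\alpha$ exactly when $a=i$, and is $0$ otherwise.
\end{enumerate}
The key gadget is a constant-depth ReLU circuit that fires uniquely at the first empty row, by testing whether column $d+1$ is nonzero in the row above and zero in the current row (the populatedness bit naturally lives in the label column). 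Routing this indicator \emph{multiplicatively} through a pass-through of the memory-layer value makes the forward output identically zero at the gate, while leaving a nonzero Jacobian of size $\alpha$ going backward. Choosing $\eta$ so that $\eta\alpha = 1/3$, and using $(x^{(i)}_b)^2 = 1$ for $b\le d$ together with $z_{d+1}=1$, an induction on $i$ then yields $W^{(i+1)}_{i,b} = \tfrac{1}{3}y^{(i)}x^{(i)}_b$ for $b\le d$ and $W^{(i+1)}_{i,d+1} = \tfrac{1}{3}y^{(i)}$, proving the memorization clause.

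Third, on the $(m+2)$-th forward pass every row is populated, so the empty-row detector fails inside the active region and a complementary branch of $f_\Conv$ takes over to simulate $A$. The $3\times 3$ receptive field is exactly what is needed to update a TM head together with its two neighbouring tape cells in one conv application; the 5-layer patch NN encodes the transition table of any fixed $s$-state TM by a constant-depth case analysis. The input tape $(x^{(1)},y^{(1)},\ldots,x^{(m)},y^{(m)},x)$ is readable from the memorized rows of $W^{(m+2)}$ together with the test query supplied through $\diag(z)$. After $L = c_1(t+m+d)$ iterations (including constant-factor overhead for encoding/decoding and head motion), the TM reaches its halting configuration, and the head $f_\head$ followed by $\mathsf{sum}$ reads off the designated output cell, yielding $A(\calS)[x]$. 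Finally, I would bound $|\mathcal{U}_s|=O(s)$ by observing that all weights in $\Theta_\rc,\Theta_\head$ can be drawn from a fixed alphabet $\{0,\pm 1,\pm 1/3,\pm 1/(3\eta)\}$ augmented with $O(s)$ values encoding TM states and tape symbols; this alphabet depends only on $s$, not on the specific TM, so $\mathcal{U}_s$ is constructible from $s$ alone.

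The main obstacle will be making the shared, recurrent conv block cleanly decouple its write behaviour from its compute behaviour while keeping the forward value at the write gate identically zero. Achieving a gradient of exactly the right sign and magnitude \emph{only} at the target row---via the multiplicative routing of the ReLU-based empty-row indicator---is the delicate design step, since any additive rather than multiplicative routing would make $\hat y^{(i)}\ne 0$, which would both change the gradient magnitude and perturb the forward pass used by subsequent SGD steps, breaking the inductive invariant on $W^{(i)}$.
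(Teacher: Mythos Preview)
Your high-level decomposition---use a first-empty-row detector to route gradients during memorization, then switch to a TM simulator on the final forward pass---matches the paper's strategy, and your detector (test whether the cell above is populated and the current cell is empty) is essentially the paper's mechanism. However, there is a genuine gap in how the memorization branch survives \emph{recurrent} application of the shared conv block.

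Your multiplicative-routing idea is meant to work for a single conv application: output $=\text{indicator}\times\text{value}$ gives $1\cdot 0=0$ at the empty row with Jacobian $1$, and $0$ elsewhere with Jacobian $0$. But the same block is applied $L=\Theta(t+m+d)$ times with shared weights. After one application every cell holds $0$, so on the second application the detector sees an all-zero grid and either fires everywhere or nowhere; in either case the Jacobian invariant collapses. You need each cell to \emph{remember}, locally, that it has already entered memorization mode and then behave as the identity (with unit gradient) on all subsequent layers. The paper handles this by packing a base-$3$ ``phase'' digit into each cell's value: on first detection the target row writes the constant $-3^{-2}$ encoding ``memorization phase'', neighboring cells read this digit and broadcast it outward one step per layer, and after $O(m)$ layers the whole grid carries that same constant. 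Crucially, the head layer $f_\head$ is then designed so that $f_\head(-3^{-2})=0$ (hence $\hat y=0$) while $f_\head'(-3^{-2})=18$; with $\eta=1/54$ this delivers exactly the $1/3$ factor. So the paper's routing is in fact \emph{additive}, not multiplicative---your worry that additive routing forces $\hat y\ne 0$ is resolved by pushing the zeroing into $f_\head$ rather than into the conv block. Your sketch omits both the broadcast mechanism and this role of $f_\head$, and without them the construction does not compose through depth.

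A second missing ingredient is how you realize the prescribed forward values \emph{and} Jacobians simultaneously with a $5$-layer ReLU patch network. ReLU networks are piecewise linear, so a literal product $\text{indicator}\times\text{value}$ is not available, and matching function values on a finite domain does not by itself pin down the gradients you need. The paper proves a standalone representation lemma: for any finite input set $\Xcal$, any target function $f$, and any target Jacobian $g$ on $\Xcal$, there is a $5$-layer ReLU net matching both, with every weight drawn from a set of size $O(|\Xcal_i|)$ depending only on the domain (not on $f,g$). Applying this with $|\Xcal_i|=|\Vcal_s|=O(s)$ is what actually yields $|\Ucal_s|=O(s)$; your proposed alphabet $\{0,\pm 1,\pm 1/3,\pm 1/(3\eta)\}$ plus $O(s)$ state labels does not obviously suffice once the phase/broadcast encoding and the custom-gradient gadgets are in place.
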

Theorem \ref{th:main} follows from the above the computation property of Lemma \ref{lem:algo}, since it implies that the error of Algorithm \ref{alg:RCNN} will be exactly equal to the error of $A$.

What remains is to prove the existence of parameters that satisfy Lemma \ref{lem:algo}. Let us now briefly describe the key functionality we require the RCNN to implement for this:

\noindent\textbf{Computation.} Each roll-out step of the TM is a local update around the head of the TM. To implement one step of the TM, we need to compute the transition function of $A$ at the location of the head, update the new head and state, and copy the inputs of the rest of the tape. Our first observation is that this local update can be implemented using a convolutional layer if we interpret the input as the tape of $A$ (input and working concatenated) with the head and state information stored along with the tape value. Composing these layers $t$ times (with the same parameters) allows us to simulate $t$ steps of the TM. More importantly, the convolutional layer requires only $O(s)$ parameters since $A$ only has $s$ states. In order to decode the tape content, head position, and state from the values fed to the layers of the RCNN, we interpret the input in base $3$ and use different positions to encode the desired information (see Appendix \ref{app:data} for more details). 

\noindent\textbf{Memorization.} Given that we can simulate the TM, we need to ensure that the input to the RCNN has the training set and the test example encoded onto it. Similar to \cite{abbe2020universality}, we can use SGD to memorize the training examples into the weights of the memory layer, with each row storing one example. We do this by ensuring that the gradient at iteration $i$ through the RCNN is 1 for exactly the $i$th row and $0$ for every other row. We also ensure that the output is 0 through the memorization phase. By using chain rule, this gives us a gradient of $[y^{(i)}x^{(i)}, y^{(i)}]$ for the $i$th row of $W_\mem$ and 0 otherwise. We can this in a local manner using the RCNN. Note that the memory layer has $O(dm)$ parameters, however we can learn these parameters from 0 initialization.

\noindent\textbf{Communication.} Lastly, we need to ensure that the network can differentiate between memorization phase (passing meaningful gradients) and computation phase (implementing the roll-out of the TM) using the local operations in the RCNN layers. We do this by implementing a local communication protocol: we broadcast a message based on certain conditions, where each RCNN layer implements a step of the broadcast. In order to broadcast to the entire input, we require $\approx d+m$ overhead in terms of the depth of the network. 

Finally, we show that the above mentioned functionality can be achieved by a 5-layer NN with $O(s^2)$ parameters. To do so, we first describe the exact function we require the network to compute and its Jacobian on all inputs that our bit precision allows (see Appendix \ref{app:RCf} for the exact function). Given the function and its Jacobian on a finite set, we prove a general representational theorem (see Appendix \ref{sec:rep}) that constructs a 5-layer NN with weights from a fixed set that can be constructed with knowledge of only $s$. We refer the reader to Appendix \ref{app:const} for the complete proof.

\section{Discussion}
\label{sec:discussion}

In this section, we discuss potential practical implications of Turing-optimality, and broadly discuss corroborations and tensions with empirical trends.

\paragraph{Discovering reusable algorithms.} Our analysis is wasteful in that, if one has multiple learning problems, e.g., multiple parity problems, one has to relearn the learning algorithm for each one. In fact, arguably the NN may not have learned an algorithm for parity problems in general, but rather a specialized algorithm that works on just one.  To find a \textit{reusable} algorithm, one needs multiple problems, say drawn from a meta-distribution $\mu$ over learning problems. The idea is simple: viewing the constant number of weights of the RCNN filter as hyperparameters, one tries multiple such hyperparameters on $\log 1/\delta$ learning problems, and finally selecting the hyperparameters that perform best on average. With a constant-sized random sample of hyperparameters, with high probability, one of them will perform nearly as well as the best constant-sized TM not only on these few training problems, but also on future problems drawn from $\mu$. This is the setting considered by the literature on meta-learning \citep{hospedales2021meta} and data-driven algorithm design \citep{balcan2020data}. Recently, \cite{garg2022can} show empirically that Transformer architectures can meta-learn and execute simple learning algorithms in-context. We leave this and numerous other interesting directions for future work.\looseness=-1

\paragraph{Concise architectures.}
Many Turing-complete architectures have been proposed and used in practice. The lens of Turing-optimality may help us understand what architectures are minimally adequate from a theoretical perspective. In particular, it has been popular to report ever-growing parameter counts for state-of-the-art models in domains such as natural language processing \citep{brown2020language,fedus2021switch,lin2021m6}. Although the other benefits of over-parameterization are at play, this work suggests that very parameter-efficient architectures are sufficient to simulate any computationally efficient learning algorithm.
In light of the above, one concrete direction for further investigation is to develop practical variants of our RCNN construction, in domains dominated by other architectures. Although our analysis is too pessimistic to be of immediate practical use, it highlights the computational power of an architecture that has occasionally appeared in applications-focused research \citep{pinheiro2014recurrent,liang2015recurrent,spoerer2017recurrent,alom2021inception}. Significantly closer to our work, \citet{schwarzschild2021can} conduct an empirical study on the ability of RCNNs to extrapolate from easier to harder tasks (thus ``learn an algorithm''); our work shows that it is possible for these architectures to learn any computationally efficient algorithm. Similarly, RCNNs have been investigated for planning in RL \citep{tamar2016value}; other empirical works which take a ``computation time'' view of depth include \citep{graves2014neural,banino2021pondernet,kaiser2015neural}. RCNNs have not seen widespread adoption in state-of-the-art deep learning compared to their non-recurrent and/or non-convolutional counterparts.\footnote{Note that the popular R-CNN of \citep{girshick2014rich} is not recurrent; the ``R'' stands for \emph{region}.}

\paragraph{Beyond local search for recurrent models.} The proliferation of non-recurrent attention-based models in domains previously dominated by recurrent networks, along with the under-representation of RCNNs, is perhaps due to instabilities in training recurrent networks with SGD \citep{pascanu2013difficulty}. Indeed, \citet{kasai2021finetuning} demonstrate that a carefully designed training procedure can convert a trained Transformer into a more parameter-efficient RNN.
There may be undiscovered practical training algorithms which can bridge the gap in favor of recurrent models. Although our Turing-complete algorithm uses SGD, it uses the gradients in a way that is far from making local greedy progress on an objective; the as-efficient-as-possible search for the correct TM is implemented by random initialization.

\paragraph{Exhaustive search.} Our Turing-optimal training pipeline relies upon exhaustively comparing classifiers trained from different random initializations to choose the best classifier within the class of concise Turing machines. This runs counter to the classical viewpoint from continuous optimization, where gradient descent is seen as a \emph{local search} method. In problem settings of a combinatorial or algorithmic nature, we posit that exhaustive search may be unavoidable; indeed, state-of-the-art pipelines already include forms of exhaustive search such as beam search and its variants \citep{reddy1977speech,leblond2021machine}, as well as chain-of-thought generation \citep{wang2022self}.

\paragraph{Memory modules.}
There have been many attempts to build practical memory modules into neural networks  \citep{graves2014neural,sukhbaatar2015end,grave2016improving,dai2019transformer}. Our construction proposes an integrated memory mechanism: use SGD to store samples in the first layer's trainable parameters, ahead of the deep (RCNN) computation layers, by carefully ensuring that the gradient signal back-propagates through the RCNN layers correctly.
\section{Conclusion}
\label{sec:conclusion}

In this paper, we present a simple NN architecture, combining recurrent and convolutional weight sharing, that achieves Turing-optimality. Among other things, it learns the well-studied class of parity functions in polynomial time, whereas prior NN analyses of parity require time exponential in the size of the parity function (or require a parity learning algorithm to be initialized into the networks weights). Our proposed architecture has connections to the deep learning literature and observed empirical trends, discussed in Section~\ref{sec:discussion}. Immediate improvements to make the architecture more concise and natural include: %
(1) reducing the size of the dense parameters to depend on the algorithm's memory usage instead of the training sample size, and (2) using SGD beyond memorization. %
In future work, it would be interesting to understand which other architectures are Turing-optimal, answering questions such as: are 2D convolutions necessary, and are there natural Transformer-based training pipelines which are Turing-optimal?

\paragraph{Limitations and broader impact.}
The primary limitation of this work is that the constant factors in our analysis are much too large to be meaningful in practice.  Nonetheless, we hope that idea of combining recurrent and convolutional weight sharing will have impact. Also, the algorithms found using enumerative program search would be, by default, difficult to interpret. Using such algorithms carries risks, especially if the algorithm is not doing what one expects it to do. 

\paragraph{Acknowledgments.} We thank Santosh Vempala for useful discussions. Sham Kakade acknowledges funding from the Office of Naval Research under award N00014-22-1-2377 and the National Science Foundation Grant under award \#CCF-1703574.

\bibliographystyle{plainnat}
\bibliography{bib}

\newpage
\appendix

\section{Turing-optimality: Proofs of Observation \ref{obs:epsdelta} and Claims \ref{claim:reduction} and \ref{claim:PAC}}\label{ap:topt}

We first restate and prove Observation \ref{obs:epsdelta}, that an $\eps,\delta$-PAO learner $A_{\eps,\delta}$ that requires $M \ge (dm/\eps\delta)^k$ examples gives rise to a PAO-learner $A_{r,r}$ for $r=2^{\lfloor -\frac{1}{3k} \log M\rfloor}$. 

\obsEpsdelta*

Note that $r$ is defined so as to be a power of 2 in $[M^{-1/3k}/2, M^{-1/3k}]$. 
\begin{proof}
Let $p(d,m,\eps,\delta) \defeq (\eps\delta)^{-3k} + (4dm)^{3k}$. For any $M \ge p(d,m,\eps,\delta)$, note first that, by definition of $r$, $ M^{-1/3k}/2 \le r \le M^{-1/3k} \le \eps\delta\le \max(\eps,\delta)$. We also claim that $M \ge (dm/r^2)^k$. To see this, note that,
\[(dm/r^2)^k \le (4dm M^{2/3k})^k = (4dm)^k M^{2/3} \le M,\]
for our polynomial $p$. Thus $A_{r,r}$ satisfies the requirements of a PAO-learner, since $r \le \eps$ and $r \le \delta$.
\end{proof}

\begin{algorithm}[tb]
  \caption{Reduce $(s,k)$-Turing-optimal learning to Turing-optimal learning.}
  \label{alg:relax}
\begin{algorithmic}
  \STATE {\bfseries Input:} learner $A_{s,k}$ (that takes inputs $s,k$),  $d,m,M\in \nats$, $Z \in (\calX_d \times \calY)^m, Z' \in (\calX_d \times \calY)^M$
  \STATE Let $Z'[:t]$ denote the first $t$ examples of $Z'$ and $Z'[-r:]$ denote the last $r$ examples
  \STATE $r = M^{1/10}$
  \FOR{$s=1$ {\bfseries to} $r$}
  \FOR{$k=1$ {\bfseries to} $r$}
  \FOR{$t=1$ {\bfseries to} $r$}
  \STATE Let $c_{s,k,t}=A_{s,k}(Z;Z'[:t])$, running $A_{s,k}$ with a maximum time limit of $M$. 
  \STATE Let $\hat{e}_{s,k,t}$ be the empirical error of $c_{s,k,t}$ over $Z'[-r:]$
  \ENDFOR
  \ENDFOR
  \ENDFOR
  \STATE  {\bfseries Output:} $c_{s,k,t}$ for $s,k,t$ that minimize $\hat{e}_{s,k,t}$.
\end{algorithmic}
\end{algorithm}

The reduction used in Claim \ref{claim:reduction} to show the equivalence of Turing-optimal and $(s,k)$-Turing-optimal, is shown in Algorithm \ref{alg:relax}. Note that the algorithm was chosen for its simplicity rather than optimizing parameters.

\claimReduction*

\begin{proof}
Since $A_{s,k}$ is a PAO learner for $\calB_{s, k}$, there is some polynomial $p(d,1/\eps,1/\delta)$ such that, for any $\eps,\delta \in 2^{-\nats}$, given $M \ge p(d,m,1/\eps,1/\delta)$ additional examples, with probability $\ge 1-\delta/2$, it outputs a classifier with error within $\eps/2$ of the best in $\calB_{s,k}$. Also, $A_{s,k}$ runs in time in $q(d,m+M)$ for some polynomial $q$.
We must show that Algorithm \ref{alg:relax} is also a PAO learner for $\calB_{s, k}$, even though it does not take $s,k$ as inputs. Let $M'=p(d,m,2/\eps,2/\delta)$.
As long as $r \ge \max(s,k,M')$, and as long as $M \ge q(d, m+M')$, we will consider $c_{s,k,t}$ as one of the candidate classifiers. Both will be the case for, \[M \ge \bigl(s+k+p(d,m,2/\eps,2/\delta)\bigr)^{10} + q\bigl(d, m + p(d,m,2/\eps,2/\delta)\bigr)=\poly(d,m,1/\eps,1/\delta).\]
Finally, clearly Algorithm \ref{alg:relax} runs in polynomial time, i.e., time $\poly(d,m+M)$ due to the timeout and number of iterations.
\end{proof}

We now move to the poof of Claim \ref{claim:PAC}. Note that the Turing-optimal learner $A$ is a PAC learner though $A$ does not even require $\eps, \delta$ as inputs.

\claimPAC*

\begin{proof}
Call the PAC learner $P_{\eps,\delta}$. Let constant $k$ be such that both $P_{\eps,\delta}$ runs in time  $\le (dm+\log(1/\eps\delta))^k$ and uses $p(d, 1/\eps, 1/\delta) \le (d/\eps\delta)^k$ examples. In particular, for $r=m^{-1/3k}$, $P_{r,r}$ is a learning algorithm that, when it is given $m \ge (d/r^2) = d^km^{2/3k}$ (equivalently $m \ge d^{3k}$) examples, with probability $\ge 1-r$ outputs a classifier with error at most $r$. And $P_{r,r}$ runs in time at most $(dm+\log(m))^k \le (2dm)^k$. Thus as long as $m \ge (8/\min(\eps, \delta)^{3})^k$, with probability $\ge 1-\delta/2$ it outputs a classifier with error at most $\eps/2$. Since $A$ is a Turing-optimal learner, it outputs a classifier whose error is within $\eps/2$ of $P_{r,r}$, with probability $\ge 1-\delta/2$, using additional samples $M= \poly(d,m)$. By the union bound, this means that with probability $\ge 1-\delta,$ it outputs a classifier with error at most $\eps$ as required by PAC learning. 
\end{proof}

\section{Proof of Theorem \ref{th:main}} \label{app:const}
In this section we will give a complete proof of \ref{th:main}. We will keep this self contained and repeat necessary content from the main paper.

As discussed before, our proof follows by construction, that is, we show that for each TM $A$ of size $s$, there exists parameters $\Theta_\rc$ and $\Theta_\head$, such that, when Algorithm \ref{alg:RCNN} is run with these parameters:
\begin{itemize}
    \item \textit{Memorization}: for the first $m+1$ steps, when $\Theta_\mem$ is trained with SGD, the gradients assist with memorizing the training set in the values of $W$
    \item \textit{Computation}: given the memorized training set, the RCNN computes the roll-out of $A$ with the input tape having the training set and the test example giving the prediction of $A$ on the test example as the output.
\end{itemize}
We then show that we can choose the parameters $\Theta_\rc$ and $\Theta_\head$ in our construction such that for all TM $A$ of size $s$ belong to a fixed finite set of size $O(s)$ that can be constructed with knowledge of only $s$. 

Let us restate Lemma \ref{lem:algo} (more formally):
\begin{lemma}[Restatement of Lemma \ref{lem:algo}]\label{lem:realgo}
There exists constants $c'_1, c'_2, c'_3$ such that: for any $d, s, m, t\in \nats$, there exists a fixed initialization set $\mathcal{U}_s$ of size $c'_1s$ where: for any $(s,m,t)$-computable learner $A$, there exists $\Theta_\rc, \Theta_\head$ such that for all $\mathcal{S} \in (\calX_d, \calY)^m$, Algorithm \ref{alg:RCNN} run on $\calS$ with $l= c'_2 (t + m + d)$, $\eta = 1/54$, and initialization set $\mathcal{U}_s$, satisfies: 
\begin{enumerate}
    \item \textit{Memorization}: For $1 \le i \le m + 1$, $W^{(i)}_{ab} = 
    \begin{cases} 
        3^{-1}y^{(a)}x^{(a)}_b &\text{if }a < i, b \le d\\
        3^{-1}y^{(a)} &\text{if }a < i, b = d + 1\\
        0 & \text{otherwise.}
    \end{cases}$.
    \item \textit{Computation}: For all $x \in \{\pm 1\}^d$, $f\left(\begin{bmatrix}x \\1\end{bmatrix};W^{(m+2)}, \Theta_\rc, \Theta_\head\right) = A(\mathcal{S})[x]$.
\end{enumerate}
\end{lemma}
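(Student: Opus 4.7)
The plan is to construct the recurrent and head parameters $\Theta_\rc,\Theta_\head$ explicitly for each $(s,m,t)$-computable TM $A$, so that the RCNN serves two interlocking roles. On iterations $i=1,\ldots,m+1$ it acts as a ``write-and-be-silent'' circuit that forces SGD on the squared loss to deposit the current example into the $i$-th row of $W$; on the query call it acts as a simulator of $A$ that reads its tape off the rows of $W$. The activations flowing through the RCNN layers are viewed as the TM tape augmented with head/state information: as sketched in Appendix \ref{app:data}, each entry is a base-$3$ encoding of a triple (tape symbol, head-here flag, state id), so the triple is recoverable from any $3\times 3$ patch by a constant-depth MLP whose width depends only on $s$. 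The dense memory layer produces an initial tape of shape $d_\mem\times(d_\inp+l)$, whose left block $W\diag(z)$ stores the training/test data while the right padding of $1$s supplies the working region.

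For the memorization claim I will choose $\Theta_\rc,\Theta_\head$ so that, whenever $W$ has exactly rows $1,\ldots,i-1$ filled (the inductive hypothesis at the start of iteration $i$), the scalar $f(z;W,\Theta_\rc,\Theta_\head)$ is identically $0$, while its Jacobian satisfies $\partial f/\partial W_{ab} = 18\cdot \mathbbm{1}[a=i]\cdot z_b$ for the $(d+1)$-dimensional input $z=[x^{(i)\top}\ 1]^\top$. Plugging this into the SGD recurrence for squared loss with $\eta=1/54$ yields
\[W^{(i+1)}-W^{(i)} \;=\; \eta\,(y^{(i)}-0)\,\nabla_W f \;=\; \tfrac{1}{3}\,y^{(i)}\,[x^{(i)\top}\ 1],\]
deposited exactly in row $i$, which is precisely what the lemma requires and preserves the inductive hypothesis. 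Because $f\equiv 0$ throughout this phase, the update depends only on the current example, so the induction is clean.

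For the computation claim I will engineer the convolutional rule so that each application of $f_\Conv$ over the grid performs one step of $A$: at the head cell it applies $A$'s transition function to update symbol, state, and head position, and everywhere else it acts as the identity. Each of these is a local function of a $3\times 3$ patch of base-$3$ codes, so a single shared filter implements all of them in parallel. At query time with $z=[x^\top\ 1]^\top$, the product $W^{(m+2)}\diag(z)$ simultaneously places the labeled training examples onto rows $1,\ldots,m$ and the test input onto row $m+1$ (because that row was filled from the dummy sample $(\mathbbm{1}_d,1)$, its elementwise product with $[x;1]$ is $[x;1]/3$). Iterating $f_\Conv$ for $L=c_1'(t+m+d)$ steps performs $t$ simulated TM steps plus the $O(m+d)$ grid-sweeping rounds needed by a local broadcast/phase-detection subprotocol that tells each cell whether it is in memorization or simulation mode. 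The head layer $f_\head$ reads off the output bit from a designated output cell and $\mathsf{sum}$ collapses it to a scalar in $\{\pm 1\}$ equal to $A(\mathcal{S})[x]$.

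The main obstacle is packaging all of these local rules --- TM transition, identity, broadcast, mode detection, and the specific zero-output/one-hot-row-Jacobian used for memorization --- into a single $5$-layer MLP $f_\Conv$ with weights drawn from a single finite set $\mathcal{U}_s$ of size $O(s)$ whose construction uses only $s$. I will reduce this to the general representation theorem in Appendix \ref{sec:rep}: given a finite input set (here, all admissible base-$3$ $3\times 3$ patches at the target bit precision $\lceil\log s\rceil+c_3$), a target function, and a target Jacobian, there exists a $5$-layer ReLU network with $O(s^2)$ parameters drawn from a universal grid that realizes both. Applying this to the finite table obtained by enumerating the combined local rule (the only $A$-dependent part of which is the transition table, an object of size $O(s)$) yields $\Theta_\rc$ with entries in $\mathcal{U}_s$, and an analogous argument yields $\Theta_\head$. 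The delicate point, and the reason memorization cannot be an afterthought, is that both the zero forward value and the prescribed derivative $18 z_b$ of $\partial f/\partial W_{ab}$ must hold at the exact patches actually arrived at during the memorization phase; this requires designing the encoding so that those patches land on ``one-sided'' inputs to the ReLUs whose derivatives compose to the correct constant, which is the trickiest bookkeeping in the construction.
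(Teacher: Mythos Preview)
Your proposal is correct and follows essentially the same approach as the paper: the same base-$3$ encoding of tape/head/state/phase, the same memorization-by-SGD mechanism (forward value $0$, row-selective Jacobian, with the factor $18$ from the head's slope at $-3^{-2}$ combining with $\eta=1/54$ to yield $1/3$), the same local broadcast protocol to distinguish memorization from computation, the same one-TM-step-per-convolution simulation, and the same appeal to the representation theorem in Appendix~\ref{sec:rep} to realize the required local function and its Jacobian as a $5$-layer MLP with weights in $\mathcal{U}_s$. The only minor difference is packaging: the paper first specifies the target functions $\bar f_\msg,\bar f_\TM,\bar f_\head$ and their required Jacobians explicitly, then invokes the representation lemma, whereas you fold this into a single narrative, but the underlying construction is the same.
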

Using the above lemma (part 2), we know that there exists parameters $\Theta_\rc, \Theta_\head$ that allow for our training pipeline to output exactly the function (say $f$) that $A$ learns on training set $\mathcal{S}$. Note that our construction implements the underlying learning algorithm $A$ and is able to provide this guarantee for all training sets $\mathcal{S}$ simultaneously. Thus, this implies that $\err_\mathcal{D}(f) = \err_\mathcal{D}(A, \mathcal{S})$ for any distribution $\mathcal{D}$. The chance that random initialization in Algorithm \ref{alg:RCNN} will find these parameters is at least $s^{-cs^2}$ for fixed some $c > 0$ since there are $\approx 10^4s^2$ parameters and each parameter has $c'_1s$ potential values. This proves Theorem \ref{th:main}.

\paragraph{Organization}. In the remaining section we will prove Lemma \ref{lem:realgo}. We will first describe how our construction will interpret the data values in $O(\log(s))$ precision (\ref{app:data}), and the type of TMs we will consider.  We will then describe the exact function of the RCNN layer and its corresponding Jacobian on the inputs (\ref{app:RCf}). We will then show how SGD uses this functionality to memorize the training examples (\ref{app:sgd}). Finally, we show that the desired functionality of the RCNN layer can be achieved by a 5-layer NN with $O(s^2)$ parameters (\ref{app:NN}). \footnote{In the main submission, we erroneously wrote $s$ instead of $s^2$ parameters. We have corrected this in the current version.}

\subsection{Data representation and TM modifications}\label{app:data}
\subsubsection{Data}
In order to implement the required functionality of the RCNN layers, we will need $O(\log s)$ bit precision for each input to store relevant information. We work in base 3 to allow for storing $\{-1, 0,1\}$ uniquely in each bit. More formally, the representation of each input/output in the network will have bit precision $\log(s) + 4$. Our construction will ensure that the inputs/outputs at each point are exactly representable with this bit precision in the following form: each input/output $a = \sum_{i=0}^{\log_2(s) + 4} a(i) 3^{-i}$ where $a(i) \in \{-1, 0, 1\}$ for $i \in \{0,1, \ldots, \log_2(s) + 4\}$. The indices will encode different functionalities as follows:
\begin{itemize}
    \item \textit{Index 0} indicates whether the input is part of the padding (1) or not (0).
    \item \textit{Index 1} indicates the content on the tape: blank symbol (0) or $\pm 1$.
    \item \textit{Index 2} indicates whether the current phase in the training is memorization (-1), computation (1), or unknown (0).
    \item \textit{Index 3} indicates the presence of the head: 1 implies head is present, and 0 indicates no head.
    \item \textit{Index $4, \ldots, \log_2(s) + 4$} indicate the state of the TM $\{0,1\}^{\log (s)}$, and are only non-zero when the head is set, that is, bit 3 $\ne 0$.
\end{itemize}
For ease of presentation, let us define $\bitExtract: \mathbb{R} \times \mathbb{Z}_{\ge 0} \rightarrow \{-1, 0, 1\}$ as the function that given input $a$ and index $i$, extracts bit $i$, that is, $a(i)$ if $a = \sum_{j=0}^{\log(s) + 4}a(j)3^{-j}$ and is undefined otherwise. We also define $\bitValue: \mathbb{R} \rightarrow \mathbb{R}$ which given input $a$ and index $i$, extracts the value corresponding to the state, that is $\sum_{j=4}^{\log(s) + 4}a(j)3^{-j+ 4}$. Let us define the set of possible values satisfying the above by $\mathcal{V}_s:= \left\{\sum_{i=1}^{\log(s) + 4}a(i)3^{-i}: a(i) \in \{-1, 0, 1\}\right\}$.

\subsubsection{Turning Machines (TMs)}
Let us formally define one-tape TMs,
\begin{definition}[Turing machine]
A Turing machine is defined as a tuple $\langle Q, \Gamma, \delta, F\rangle$ where $\Gamma$ is a finite non-empty set of tape alphabet symbols, $Q$ is a finite set of states, $F \subseteq Q$ is the final state indicating accept or reject, and $\delta: Q \times \Gamma \rightarrow Q \times \Gamma \times \{0, 1\}$ is the transition function where $0$, $1$ are left and right shifts. Given input $x \in \Gamma^*$ put in the start of the tape, let $\mathcal{TM}_{\langle Q, \Gamma, \delta, F\rangle}(x) \in \{0,1\}$ denote the output the TM produces if it halts on $x$. We denote the size of the TM $s$ by the number of states, that is, $s = |Q|$.
\end{definition}
For technical ease, our algorithm will be competitive with all TMs of size $s$ with the following modifications:
\begin{itemize}
    \item \textit{States}: We do a 1-1 mapping of state space $Q$ of size $s$ to $\{\sum_{i=4}^{\log(s) + 4}a_i3^{-i}| a_i \in \{-1, 1\}\}$ with $0$ being the start state.
    \item \textit{Single tape}: Instead of assuming 2 tapes with the training set on one tape and the test example on another, we will assume that they are concatenated onto one tape, followed by the working tape.
    \item \textit{2D tape instead of 1D tape}: TM will run on a 2D tape with functionality for up, down, left, right, and no move. This implies that the transition function will have the following form $\delta: Q \times \Gamma \rightarrow Q \times \Gamma \times \{\uparrow, \downarrow, \leftarrow, \rightarrow, \times \}$. The 2D tape allows us to have markers for the number of samples and data dimension without requiring that in the state space.
    \item \textit{First step}: We assume that the machine's first step is to not move and not change the tape symbol. This is useful for starting the communication protocol.
    \item \textit{XOR input}: In order to compute $A(\mathcal{S})[x]$ for $\mathcal{S} = \{(x^{(i)}, y^{(i)})_{i=1}^m\}$, we will have a 2D matrix of size $(m+1) \times (d+1)$ with the $m+1$th row containing $[x^\top ~ 1]$, and row $i$ ($\le m)$ containing $y^{(i)}\left[(x \circ x^{(i)})^\top  ~y^{(i)}\right]$ where $\circ$ is the coordinate-wise dot product.
    \item \textit{Halting position}: TM halts with output on the left-top corner of the tape and the rest of the input tape set to blank. Working tape can have any value.
\end{itemize}
Standard reductions show that two tape TMs can be implemented using 1 tape TMs. It is not hard to see that the TM with the 2D tape can implement any TM on the 1D tape. Not moving on the first step can be made possible by adding a single additional state with a path to the starting state. As for the XOR input tape, this can be converted to the original tape by adding extra states to multiply each coordinate of $x$ down the column and multiplying $y^{(i)}$ across the row. Lastly, we can use two additional states to make the output 0 on the entire input tape except the left-top corner with the output. All these conversion causes a blow up of a constant number in states and polynomial in $m,d$ extra runtime. We skip the details here as these follow from standard reductions of TMs, and assume from now on that our TM has the above form.

\subsection{RCNN functionality}\label{app:RCf}
Here we will describe the exact function the RCNN layer ($f_\Conv$) implements and its Jacobian. We will describe how to convert this into a 5-layer NN in the subsequent sections.

Our RCNN layers take in $3 \times 3$ grid around each input coordinate with the coordinate as the center. Our construction ensures that each input/output coordinate has representation as above. For the corner coordinates, we consider padding $p =  1/6$ so it is outside our bit representation. This allows it to be identified distinctly from any value of the input/output.

Now, we would like for all $X = \left( \begin{tabular}{ccc}
         $x_{-1,-1}$ & $x_{-1,0}$ & $x_{-1,1}$ \\
         $x_{0,-1}$ & $x_{0,0}$ & $x_{0,1}$\\
         $x_{1,-1}$ & $x_{1,0}$ & $x_{1,1}$
    \end{tabular}\right)$ such that each entry is in the set $\mathcal{V}_s$, the function computed by $\bar{f}_\Conv$ has three modes depending on the value of index $2$ corresponding to the phase: \textit{message passing and memorization} ($\bar{f}_\msg$), \textit{computation} ($\bar{f}_\TM$), and \textit{identity pass through}. 
\begin{align*}
\bar{f}_{\Conv}\left( X \right)& = 
    \begin{cases}
    \bar{f}_{\msg}\left( X\right) & \text{if } \underbrace{\bitExtract(x_{0,0}, 2) = 0}_{\text{is phase unknown?}},\\
    \bar{f}_{\TM}\left( X\right) & \text{if } \underbrace{\bitExtract(x_{0,0}, 2) = 1}_{\text{is compute phase?}},\\
    x_{0,0} &\text{otherwise.}\\
    \end{cases}
\end{align*}

Here, $\bar{f}_\msg$ runs the message passing protocol which identifies the correct phase and then broadcasts it to all inputs. It also assists with memorization, by identifying the location to memorize and ensuring that the gradients are zero for all non-memorizing coordinates. Let us formally describe the \textit{message passing and memorization} functionality:
\begin{align*}
\bar{f}_{\msg}\left( X\right) &= 
    \begin{cases}
        \underbrace{-3^{-2}}_{\text{memory phase}} &\text{if } \underbrace{x_{0,0} = 0}_{\text{not memorized?}}\text{ and } \underbrace{(x_{-1,0} = p \text{ or } \bitExtract(x_{-1,0}, 1) \ne 0)}_{\text{is first row or is above row memorized?}} \\
        \underbrace{- 3^{-2}}_{\text{memory phase}} &\text{if } \underbrace{\bitExtract(x_{-1,0},2) = -1 \text{ or } \bitExtract(x_{1,0},2) = -1}_{\text{are below or above coordinates in memory phase?}} \\
        x_{0,0} + \underbrace{3^{-2}}_{\text{compute phase}} + \underbrace{3^{-3}}_{\text{head}} &\text{if } \underbrace{\bitExtract(x_{0,0},1) \ne 0 \text{ and } x_{0, 1} = 1 \text{ and } x_{1,0} = p}_{\text{is row memorized and is coordinate bottom-left coordinate?}} \\
        x_{0, 0} + \underbrace{3^{-2}}_{\text{compute phase}} &\text{if } \underbrace{\bitExtract(x_{-1,0},2) = 1 \text{ or } \bitExtract(x_{1,0},2) = 1 \text{ or } \bitExtract(x_{0,-1},2) = 1 \text{ or } \bitExtract(x_{0,1},2) = 1}_{\text{are any of the neighbouring coordinates in compute phase?}}\\
        x_{0,0} &\text{otherwise.} 
    \end{cases}
\end{align*}
The invariant that is maintained during training is that samples are memorized row wise in the weights of $W$. Since the weights are 0 for rows that are yet to be memorized, we can identify the row to memorize (first if condition). Similarly, once all rows are memorized, it can be identified at the last row and computation phase can begin with the head being assigned (third if condition). See Figure \ref{fig:id-mem}for a visual explanation of this. Post identification, our message passing protocol essentially lets the other entries set the phase themselves (second and fourth if conditions). In the memorization phase, this is supplemented with zeroing the output (and the gradient) of any irrelevant coordinate.

Now we are ready to describe the \textit{TM Roll-out Functionality}. Once the compute phase is established and the head is assigned. Observe that our TM does not move in the first step in order to ensure that the compute phase message is broadcast one step ahead of the movement of the TM. 
\begin{align*}
\bar{f}_{\TM}\left( X\right) &= 
    \begin{cases}
    \underbrace{\gamma(x_{0,0})[1]/3}_{\text{new tape value}} + \underbrace{3^{-2}}_{\text{compute phase}} + \underbrace{\mathbbm{1}[\gamma(x_{0,0})[2] = \times](3^{-3} + \gamma(x_{0,0})[0]3^{-4})}_{\text{update head and state if it is not moving}}  &\text{if }  \underbrace{\bitExtract(x_{0,0}, 3) = 1}_{\text{is head?}}\\
    x_{0,0} + \underbrace{\mathbbm{1}[\gamma(x_{1,0})[2]  = \uparrow](3^{-3} + \gamma(x_{1,0})[0]3^{-4})}_{\text{update head and state if it is moving to the current coordinate}}  &\text{if } \underbrace{\bitExtract(x_{1,0}, 3) = 1}_{\text{is head below?}}\\
    x_{0,0} + \underbrace{\mathbbm{1}[\gamma(x_{-1,0})[2]  = \downarrow](3^{-3} + \gamma(x_{-1,0})[0]3^{-4})}_{\text{update head and state if it is moving to the current coordinate}}  &\text{if } \underbrace{\bitExtract(x_{-1,0}, 3) = 1}_{\text{is head above?}}\\
    x_{0,0} + \underbrace{\mathbbm{1}[\gamma(x_{0,1})[2]  = \leftarrow](3^{-3} + \gamma(x_{-1,0})[0]3^{-4})}_{\text{update head and state if it is moving to the current coordinate}}  &\text{if } \underbrace{\bitExtract(x_{0,1}, 3) = 1}_{\text{is head on the right?}}\\
    x_{0,0} + \underbrace{\mathbbm{1}[\gamma(x_{0,-1})[2]  = \rightarrow](3^{-3} + \gamma(x_{0,-1})[0]3^{-4})}_{\text{update head and state if it is moving to the current coordinate}}  &\text{if } \underbrace{\bitExtract(x_{0,-1}, 3) = 1}_{\text{is head on the left?}}\\
    x_{0,0}  &\text{otherwise.}
    \end{cases}
\end{align*}
for $\gamma(x) = \delta(\bitValue(x), \bitExtract(x, 1))$ is the output of the transition matrix for the current state, head, and tape contents. Here index 0 is the new state, index 1 is the new tape contents, and index 2 is the direction of head movement. In the above functionality, the checks identify where the head is and update the TM based on the current state and tape contents. For the coordinates not adjacent to the head, the function just passes through the tape value. This implements one step of the TM.

Finally, in order to ensure that gradients are passed through during memorization, we require the gradients to satisfy,
\[
\nabla_{x_{i,j}}\bar{f}_\Conv(X) = 
    \begin{cases}
        1 & \text{if } i= j = 0 \text{ and }\underbrace{\bitExtract(x_{0,0}, 2) = -1}_{\text{is memory phase?}} \\
        1 & \text{if } i= j = 0 \text{ and }\underbrace{\bitExtract(x_{0,0}, 2) = 0}_{\text{is phase unknown?}} \text{ and }\underbrace{x_{0,0} = 0}_{\text{not memorized?}}\text{ and } \underbrace{(x_{-1,0} = p \text{ or } \bitExtract(x_{-1,0}, 1) \ne 0)}_{\text{is first row or is above row memorized?}}\\
        0 & \text{otherwise.}
    \end{cases} 
\]
Observe that, the above formulation act as a filter, to allow gradients to pass through only when the convolution starts the memorization, and as long as the memorization continues. This allows us to pass gradients down to only the row of $W$ that is memorizing the example. We will explain this in more detail in the next section.
\begin{figure}
    \centering
    \includegraphics[width=0.95\textwidth,bb=0 0 1350 400]{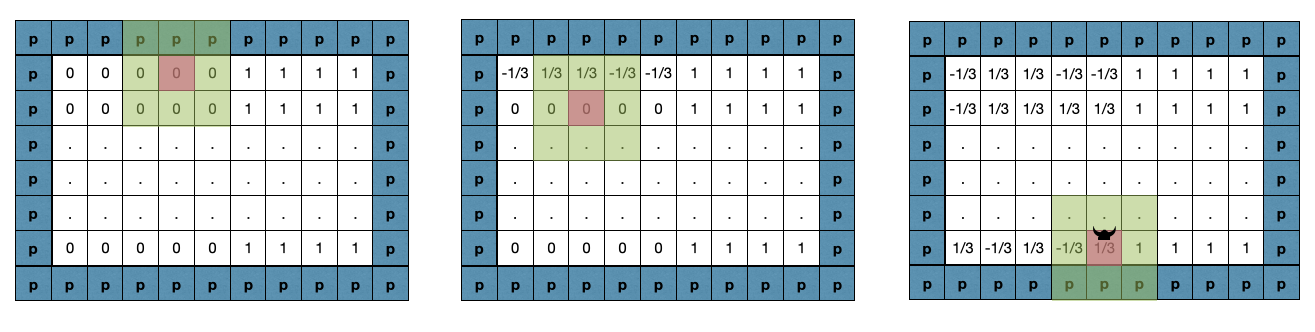}
    \caption{\textit{(Left, Center) }$\bar{f}_\Conv$ can identify the row to memorize in two ways: (left) At iteration 1, check if the current coordinate is 0 and the coordinate above is padding, and (center) at iteration $i \le m+1$, check if current coordinate is 0 and coordinate above has bit $1$ set, that is, has value $\pm 1/3$. Subsequently, $\bar{f}_\Conv$ can update the coordinate value by adding $-3^{-2}$ to signal memorization phase to the neighbors. \textit{(Right)} $\bar{f}_\Conv$ can identify when memorization is done and set the head appropriately to start the TM roll-out: The left-bottom corner coordinate of the input (ignoring the padding) can identify its global position using the pattern around it, and check if it has memorized (by checking bit 1). Subsequently, $\bar{f}_\Conv$ can update the coordinate value by adding $3^{-2} + 3^{-3}$ to signal computation phase to the neighbors and assign head location.}
    \label{fig:id-mem}
\end{figure}

Finally, we want $\bar{f}_\head: \mathbb{R} \rightarrow \mathbb{R}$ to implement the following function
\begin{align*}
    \bar{f}_\head(x) &= \begin{cases}
        -1 & \text{if } x \le -1/6\\
        18(x + 3^{-2}) & \text{if } x \in (-1/6, 0)\\
        -18(x - 3^{-2}) & \text{if } x \in (0, 3^{-2})\\
        18(x - 3^{-2}) & \text{if } x \in (3^{-2}, 1/6)\\
        1 & \text{otherwise.}
    \end{cases}
\end{align*}
The head operation works as a truncation to remove all the irrelevant bits in our output.

\subsection{Training via SGD} \label{app:sgd}
Let us now show how our construction performs memorization via SGD. In order to compute the gradient update, it will be helpful to define some new notation. For a matrix $X \in \mathbb{R}^{d_1 \times d_2}$, define a grid extractor function $\patch: X \times [d_1] \times [d_2] \rightarrow \mathbb{R}^{3 \times 3}$ such that $\patch(X, i, j)$ outputs the $3\times 3$ sub-matrix of $X$ centered at $i,j$ with padding $p = 1/6$ at the edges. We will also define $\bar{f}_\ConvLayer$ as the application of $\bar{f}_\Conv$ to the entire input, that is, for input $X$, $\bar{f}_\ConvLayer(X)_{i,j} = \bar{f}_\Conv(\patch(X, i, j))$ for all $i,j$. 

\begin{lemma}[Forward pass and backward gradient calculation]\label{lem:fwd}
For inputs $X \in \RR^{(m+1) \times (d+1)}$, if there exists $1 \le\tau \le m + 1$, such that $X_{i, j} \in \{\pm 3^{-1}\}$ for all $i < \tau, j \in [d+1]$ and $X_{i,j} = 0$ for all $i \ge \tau, j \in [d+1]$, then 
\begin{itemize}
    \item $\bar{f}^{(l)}_\ConvLayer(Z)_{a,b} = -3^{-2}$ for all $a \in [m+1],b \in [d+1]$
    \item For all $i \in [m+1],j \in [d+1]$, $\nabla_{X_{i,j}} \bar{f}_\ConvLayer^{(l)}(Z)_{a,b} = \begin{cases} 1 & \text{ if } a = i = \tau, b = j\\0 & \text{ otherwise}\end{cases}$
\end{itemize}
where $Z = [X ~ \mathbbm{1}_{(m+1) \times l}]$.
\end{lemma}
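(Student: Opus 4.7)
The plan is to track the iterated output $Y^{(k)} := \bar{f}_\ConvLayer^{(k)}(Z)$ for $k = 0, 1, \ldots, l$, proving both claims by induction on $k$ and then combining via the chain rule. The crucial structural observation is that $\bar{f}_\Conv$ has a \emph{diagonal Jacobian}: by the gradient spec, $\nabla_{x_{i,j}} \bar{f}_\Conv(X)$ is nonzero only at the patch center $(i,j)=(0,0)$. Consequently, $\partial Y^{(k)}_{a,b}/\partial Y^{(k-1)}_{i,j}$ is nonzero only when $(i,j)=(a,b)$, so by the chain rule $\nabla_{X_{i,j}} Y^{(l)}_{a,b}$ vanishes unless $(i,j)=(a,b)$, in which case it equals the product over $k=1,\ldots,l$ of the local layer gradients evaluated at position $(a,b)$.

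For the forward claim, I will prove by induction on $k$ the invariant that in the first $d+1$ columns of $Y^{(k)}$, (i) every row $a\in[m+1]$ with $|a-\tau|\le k-1$ satisfies $Y^{(k)}_{a,b}=-3^{-2}$, (ii) every row $a$ with $|a-\tau|\ge k$ still has $Y^{(k)}_{a,b}=X_{a,b}$, and that in the ones-padded columns $b>d+1$ every entry remains at value $1$. The base case $k=0$ is the hypothesis. In the inductive step at layer $1$, row $\tau$, the first branch of $\bar{f}_\msg$ fires because $X_{\tau,b}=0$ while the cell above is either the horizontal edge padding $p=1/6$ (when $\tau=1$) or holds $\pm 1/3$ with $\bitExtract(\cdot,1)\neq 0$; the output becomes $-3^{-2}$. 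In subsequent iterations, the second branch of $\bar{f}_\msg$ propagates the memory-phase marker by one row per iteration, since it suffices to detect $\bitExtract(\cdot,2)=-1$ in a vertical neighbor. Once a cell holds $-3^{-2}$, its own $\bitExtract(\cdot,2)=-1$ routes the outer $\bar{f}_\Conv$ dispatcher to the identity branch, so the value is stable. Entries of value $1$ in the ones-padded columns satisfy none of the firing conditions of $\bar{f}_\msg$ (branch $1$ needs $x_{0,0}=0$; branch $2$ needs a vertical neighbor with $\bitExtract(\cdot,2)=-1$, but vertical neighbors are also ones; branches $3$ and $4$ need $\bitExtract(x_{0,0},1)\neq 0$ or a neighbor with $\bitExtract(\cdot,2)=1$, neither of which holds), so they stay at $1$. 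For $l\ge m$, all rows in the first $d+1$ columns are converted to $-3^{-2}$, establishing the first claim.

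For the gradient claim, at position $(\tau,b)$ with $b\le d+1$: at layer $1$, the second gradient branch fires by the same reasoning as the forward step and contributes $1$; at layers $k\ge 2$, the input holds $-3^{-2}$ with $\bitExtract(\cdot,2)=-1$, so the first gradient branch fires and again contributes $1$. The product of local gradients along this column is therefore $1$. For $a\neq\tau$ with $b\le d+1$, I claim that the local gradient already vanishes at layer $1$: if $a<\tau$ then $X_{a,b}\in\{\pm 1/3\}$ has $x_{0,0}\neq 0$ (killing the second branch) and $\bitExtract(\cdot,2)=0$ (killing the first); if $a>\tau$ then $X_{a,b}=0$ but the cell directly above has $\bitExtract(\cdot,1)=0$ and is not edge padding, so the second branch fails. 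Either way the local gradient at layer $1$ is $0$, killing the entire chain-rule product. Combined with the diagonal Jacobian, this yields the claimed formula.

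The main obstacle I anticipate is the bookkeeping for boundary cases: the extremes $\tau=1$ and $\tau=m+1$ (where the cell directly above or below is the edge padding $p$), and the column interface between $b=d+1$ and $b=d+2$, where a $3\times 3$ patch straddles data and ones-padding. These are resolved by careful case analysis using the fact that the edge padding $p=1/6$ is distinguishable from every value in $\mathcal{V}_s$, so no unintended branch of $\bar{f}_\msg$ or the gradient spec fires on a mixed patch.
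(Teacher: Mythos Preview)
Your proposal is correct and follows essentially the same route as the paper: induction on depth for the forward values (the memory-phase marker $-3^{-2}$ appears at row $\tau$ and spreads one row per layer via branch~2 of $\bar f_\msg$, then is held fixed by the identity dispatch), and the chain rule collapsed to a product of center-only local gradients for the backward part. You are in fact more careful than the paper about the ones-padded columns and the $p=1/6$ edge cases; the only nit is that ``$l\ge m$'' should be $l\ge m+1$ to guarantee all $m{+}1$ rows are reached in the worst case $\tau\in\{1,m{+}1\}$.
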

\begin{proof}
Suppose the condition is satisfied for $\tau$, then we will prove by induction, the following claim: for $t >0$, for all $b \in [d+1]$ 
\[
\bar{f}_\ConvLayer^{(t)}(Z)_{a,b} = \begin{cases}
    3^{-2} & \text{ if }|a - \tau| < t\\
    Z_{a,b} & \text{ otherwise.}
\end{cases}
\]
Note that this holds for $t = 1$:
\begin{align*}
\bar{f}_\ConvLayer(Z)_{\tau,b} &= \bar{f}_\Conv(\patch(Z, \tau, b))\\
&= \bar{f}_\msg(\patch(Z, \tau, b)) & \textit{(since $X_{\tau, b} = 0$)}\\
&= - 3^{-2} &\textit{(first if condition of }\bar{f}_\msg) 
\end{align*}
For all $a \ne \tau$, 
\begin{align*}
\bar{f}_\ConvLayer(Z)_{a,b} &= \bar{f}_\Conv(\patch(Z, a, b))\\
&= \bar{f}_\msg(\patch(Z, a, b)) & \textit{(since $\bitExtract(X_{a, b},2) = 0$)}\\
&= Z_{a,b} &\textit{(no condition (1-4) is true of }\bar{f}_\msg).
\end{align*}
Let us assume it holds for $t$, and prove for $t+1$. For $a$ such that $|a - \tau| <t$, since memory phase is set, $\bar{f}_\Conv$ behaves like an identity match. For $a$ such that $|a-\tau| = t$, we have
\begin{align*}
    \bar{f}_\ConvLayer^{(t+1)}(Z)_{a,b} &= \bar{f}_\Conv(\patch(\bar{f}_\ConvLayer^{(t)}(Z), a, b))&\\
    &= \bar{f}_\msg(\patch(\bar{f}_\ConvLayer^{(t)}(Z), a, b)) &\textit{(since $\bitExtract(X_{a, b},2) = 0$)}\\
    &= 3^{-2} &\textit{(second if condition of $\bar{f}_\msg$)}.
\end{align*}
Similar to the base case argument, for all $a$ outside this band, none of the if conditions are satisfied and it acts as a pass through. Since $l > m+1$, we get the first part of the above lemma.

For the second part, we will again prove by induction on depth of RCNN. For $l' =1$, we have
\begin{align*}
&\nabla_{X_{i,j}} \bar{f}_\ConvLayer(Z)_{a,b} &\\
&= \nabla_{X_{i,j}}\bar{f}_\Conv(\patch(Z, a, b))\\
&= \sum_{a',b' \in \{-1, 0, 1\}} \nabla_{a', b'}\bar{f}_\Conv(\patch(Z, a, b))\cdot \nabla_{X_{i,j}} Z_{a + a', b + b'} & \textit{(by chain rule)}\\
&= \nabla_{0, 0}\bar{f}_\Conv(\patch(Z, a, b))\cdot \nabla_{X_{i,j}} Z_{a, b} & \textit{(by gradient construction of $\hat{f}_\Conv$)}\\
&= \nabla_{0, 0}\bar{f}_\Conv(\patch(Z, a, b))\cdot \mathbbm{1}[a=i \land b = j].
\end{align*}
Observe that, by if condition 2 of gradient of $\bar{f}_\Conv$, for $a = \tau$, the gradient is 1. For all other coordinates, none of the if conditions are satisfied and hence the gradients are 0. 

Let us assume for $l' < l$ and prove for $l' + 1$. 
\begin{align*}
    \nabla_{X_{i,j}}\bar{f}^{(l'+1)}_\Conv(Z)_{a,b} &=   \nabla_{0,0}\bar{f}_\Conv(\patch(f^{(l')}_\Conv(Z), a, b))\cdot \nabla_{X_{i,j}}\bar{f}^{(l')}_\Conv(Z)_{a,b}\\
    &= \nabla_{0,0}\bar{f}_\Conv(\patch(f^{(l')}_\Conv(Z), a, b)) \cdot \mathbbm{1}[a = i = \tau \land b = j].
\end{align*}
From the induction before, we have that $\bar{f}_\ConvLayer^{(l')}(Z)_{\tau, b} = 3^{-2}$ for all $l' > 0$, therefore by the gradient condition, we have $\nabla_{0,0}\bar{f}_\Conv(\patch(f^{(l')}_\Conv(Z), \tau, b)) = 1$ giving us the desired result.
\end{proof}
Let us now compute the back-propagated gradient for each input coordinate through the RCNN layers. 
Now we are ready to prove part 1 of Lemma \ref{lem:realgo} with $\bar{f}_\Conv, \bar{f}_\head$. We restate it as,
\begin{lemma}
For $\tau < m +2$ of Algorithm \ref{alg:RCNN} with RCNN layers computed by $\bar{f}_\Conv$, head computed using $\bar{f}_\head$, and $\eta = 1/54$, we have: 
\[
W^{(\tau + 1)}_{ab} = 
    \begin{cases} 
        3^{-1}y^{(a)}x^{(a)}_b &\text{if }a < \tau, b \le d\\
        3^{-1}y^{(a)} &\text{if }a < \tau, b = d + 1\\
        0 & \text{otherwise.}
    \end{cases}.
\]
\end{lemma}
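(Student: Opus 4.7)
My plan is to prove the claim by induction on $\tau$. The base case $\tau = 0$ is immediate: Algorithm \ref{alg:RCNN} initializes $W^{(1)} = 0$ and the memorization conditions $a < 0$ are vacuous. For the inductive step, I assume $W^{(\tau)}$ has the claimed pattern---rows $1,\ldots,\tau-1$ encode the corresponding examples and every other row is zero---and show that the $\tau$-th SGD update, which consumes $(x^{(\tau)}, y^{(\tau)})$, extends the pattern to row $\tau$ without disturbing any other row.

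The first step is to analyze the forward pass on $z^{(\tau)} = [{x^{(\tau)}}^\top \; 1]^\top$. The input to the RCNN stack is $Z = f_\mem(z^{(\tau)}; W^{(\tau)}) = [W^{(\tau)}\diag(z^{(\tau)}) \; \mathbbm{1}_{(m+1)\times l}]$, whose $(i,j)$-entry in the first $d+1$ columns equals $\tfrac{1}{3}\,y^{(i)} x^{(i)}_j x^{(\tau)}_j \in \{\pm 3^{-1}\}$ for $i<\tau$ and $0$ for $i\ge\tau$, by the inductive hypothesis (using the convention $x^{(i)}_{d+1} = 1$). This is exactly the hypothesis of \pref{lem:fwd} with boundary row $\tau$, so I obtain (a) every post-RCNN entry equals $-3^{-2}$, and (b) $\nabla_{X_{i,j}}\bar{f}_\ConvLayer^{(l)}(Z)_{a,b} = 1$ iff $a = i = \tau$ and $b = j$, and vanishes otherwise.

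Next I trace the scalar output and the loss gradient back. Because $\bar{f}_\head(-3^{-2}) = 18(-3^{-2}+3^{-2}) = 0$, the output $f$ is the zero sum, so $\partial_{\hat{y}}\ell(0, y^{(\tau)}) = -y^{(\tau)}$; meanwhile $\bar{f}'_\head(-3^{-2}) = 18$ from the second piece of the definition of $\bar{f}_\head$. Chain-ruling through the sum, the pixel-wise head, the RCNN Jacobian from \pref{lem:fwd}, and the memory-layer Jacobian $\nabla_{W_{i,j}}X_{a,b} = \mathbbm{1}[a=i, b=j]\, z^{(\tau)}_j$, all sums collapse to a single contributing term at $(a,b) = (\tau, j)$ with $i = \tau$, yielding
\[
\nabla_{W_{i,j}}\ell \;=\; -18\, y^{(\tau)}\, z^{(\tau)}_j\, \mathbbm{1}[i=\tau].
\]
The SGD update with $\eta = 1/54$ then gives $W^{(\tau+1)}_{\tau,j} = \tfrac{18}{54}\, y^{(\tau)} z^{(\tau)}_j = \tfrac{1}{3}\, y^{(\tau)} z^{(\tau)}_j$---which specializes to $\tfrac{1}{3}y^{(\tau)} x^{(\tau)}_j$ for $j \le d$ and $\tfrac{1}{3}y^{(\tau)}$ for $j = d+1$---while $W^{(\tau+1)}_{i,j} = W^{(\tau)}_{i,j}$ for every $i \ne \tau$. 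This closes the induction.

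The main obstacle is not conceptual but accounting. All the intricate local design---the phase broadcast, the gradient-filter logic in $\bar{f}_\Conv$, and the single-row support of the Jacobian---is already packaged inside \pref{lem:fwd}. What remains is one scalar chain-rule computation across four layers (memory, recurrent convolution, pixel-wise head, final sum), and the only thing to verify with care is that the pixel-wise head and summation do not smear the gradient across rows; this is immediate from the zero-one structure of the RCNN Jacobian provided by \pref{lem:fwd}.
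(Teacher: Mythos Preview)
Your proof is correct and follows essentially the same approach as the paper: induction on $\tau$, applying \pref{lem:fwd} to the input produced by the inductive hypothesis, then a chain-rule computation using $\bar f_\head(-3^{-2})=0$ and $\bar f'_\head(-3^{-2})=18$ to conclude that the SGD step writes $\tfrac{1}{3}y^{(\tau)} z^{(\tau)}$ into row $\tau$ only. If anything, your write-up is slightly more explicit than the paper's about the memory-layer Jacobian $\nabla_{W_{i,j}}X_{a,b}=\mathbbm{1}[a=i,b=j]\,z^{(\tau)}_j$ and about why the pixel-wise head and summation do not mix rows.
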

\begin{proof}
We will prove by induction. Observe that, this trivially holds for $\tau = 0$ since $W^{(1)} = \mathbbm{0}_{(m+1) \times (d+1)}$. Now let us assume it hold for $m+1 > t > 0$, we will show that it holds for $t + 1$.

Since $x^{(t + 1)} \in \{\pm 1\}^d$, we have $X^{(t + 1)} = W^{(t+1)} \diag\left(\left[{x^{(t + 1)}}^\top ~ 1\right]^\top\right)$ is such that its first $t$ rows have entries in $\{\pm 3^{-1}\}$ and rest of the rows have entries 0. This satisfies the condition of Lemma \ref{lem:fwd} with $\tau = t+1$, giving us the following:
\begin{itemize}
    \item $\bar{f}^{(l)}_\ConvLayer(Z^{(t+1)})_{a,b} = -3^{-2}$ for all $a \in [m+1],b \in [d+1]$
    \item For all $i \in [m+1],j \in [d+1]$, $\nabla_{X_{i,j}} \bar{f}_\ConvLayer^{(l)}(Z^{(t+1)})_{a,b} = \begin{cases} 1 & \text{ if } a = i = t + 1, b = j\\0 & \text{ otherwise}\end{cases}$
\end{itemize}
where $Z^{(t+1)} = [X^{(t+1)} ~ \mathbbm{1}_{(m+1) \times l}]$.

Now let us compute the update step. Observe that $\bar{f}_\head(-3^{-2}) = 0$, therefore $f(x^{(t+1)};W^{(t+1)}, \bar{f}_\Conv, \bar{f}_\head) = 0$. Now, using chain rule and the above observations, we can compute the full desired gradient,
\begin{align*}
\nabla_{W^{(t+1)}_{a, b}}&\ell\left( f(x^{(t+1)};W^{(t+1)}, \bar{f}_\Conv, \bar{f}_\head), y^{(t+1)}\right) \\
&= -(y^{(t+1)} - f(x^{(t+1)};W^{(t+1)}, \bar{f}_\Conv, \bar{f}_\head)) \nabla_{W^{(t+1)}_{a, b}} f(x^{(t+1)};W^{(t+1)}, \bar{f}_\Conv, \bar{f}_\head)\\
&= - 18y^{(t+1)} \sum_{i, j}\nabla_{Z^{(t+1)}_{a, b}} \bar{f}^{(l)}_{\rc}(Z^{(t+1)})\cdot \nabla_{W_{a,b}^{(t+1)}}\bar{f}_\mem(x^{(t+1)};W^{(t+1)})_{i,j}\\
&= -18 y^{(t+1)}\mathbbm{1}[a = i] z^{(t+1)}_b.
\end{align*}
where $z^{(t+1)} = [{x^{(t+1)}}^\top ~1]^\top$. Note that the above follows from observing that the gradient of $\bar{f}_\head$ at $-3^{-2}$ is 18 and the properties of the gradient of $\bar{f}_\Conv$ from above.

With $\eta = 1/54$, we get that, $W^{(t+2)}$ satisfies the induction argument. This completes the proof.
\end{proof}
Now once the memorization phase is over, the input to $\bar{f}_\Conv$ will have on the tape, the training samples (in the XOR form with the current input) along with a clean input at the end (since $W^{(m+2)}_{m+1} = \mathbbm{1}_{d+1}$ due to our dummy sample). At this stage, our $\bar{f}_\Conv$ will implement $\bar{f}_\msg$ to set the head (at $(m+1, d+1)$ position) using the third if condition. Once this is set, it will broadcast (using if condition 4) while the computation starts on those positions with computation flag set. Since our first step does not involve any movement of the head, the compute phase message passing protocol is always ahead of the head movement. Once the compute phase is set, $\bar{f}_\Conv$ starts implementing $\bar{f}_\TM$ which computes the roll-out of the TM. Finally, once the RCNN layers have been applied $\ge l + m+1 + d$ times, the TM will terminate. The final layer output prior to $\bar{f}_\head$ will be such that the top-left corner will have output set to either $\pm 3^{-1} + 3^{-2}$ (bit 1 will be set to $\pm 1$, bit 2 will be set to $3^{-2}$, and the rest of the bits being set to the final state), and the rest will be set at $3^{-2}$. $\bar{f}_\head$ will ensure that this is truncated to $\pm 1$ accordingly. The rest of the coordinates in the final sum will be 0 since they will have value exactly $3^{-2}$ and will be truncated to 0 by $\bar{f}_\head$. This proves the second part of Lemma \ref{lem:realgo}. 

Last we need to show that $\bar{f}_\Conv$ and $\bar{f}_\head$ can be constructed using NNs. It is not hard to see that $\bar{f}_\head$ can be constructed using a one layer ReLU network with at most 6 ReLUs. To construct $\bar{f}_\Conv$ requires not only function value matching on $\mathcal{V}_s$, but also gradient matching. The subsequent section proves a general representation representation result that allows us to do the same.

\subsection{Constructing $\bar{f}_\Conv$ as a NN} \label{app:NN}
 Here show how the $\bar{f}_\Conv$ can be implemented by 5-layer neural network in a way that keeps the parameters and their choices bounded by $\mathsf{poly}(s)$. We show a general result that takes any function and gradient specification on discrete domains and converts it into a 5-layer network. Corollary \ref{cor:unique} can be directly applied to our construction of $\bar{f}_\Conv$ with $\mathcal{X} = \mathcal{Y}$ being all rational numbers with precision $O(\log(s))$ in base 3. This lemma gives us a construction of a 5-layer net where each parameter lies in a set of size $O(s)$. This set can be computed based on knowing the input domain, which is known a priori.

\subsubsection{Representing discrete functions with gradient pass-throughs} \label{sec:rep}

To implement the discrete functions used in the main construction, we make use of Lemma~\ref{lem:fn-approx-with-gradient}, which we state and prove in this section. It constructs a 5-layer fully-connected neural network whose values match those of an arbitrary multivariate real function on a finite domain, while allowing gradients to pass through for an arbitrary choice of inputs. The basic idea for the construction (build a basis of indicator functions, and enumerate over all possible input-output pairs) is not new, but we could not find an explicit theorem satisfying our additional requirements.\footnote{A proof sketch for the indicator construction can be found in \citep{nielsen2015neural}. A quantitative version, without the additional considerations in this work, appears as Lemma B.8 in \citep{edelman2021inductive}, which requires a function representation lemma with robustness and weight norm bounds, for the purpose of obtaining non-vacuous generalization guarantees arising from sparsified inputs.} We hope this \emph{function approximation lemma with custom gradients} will be useful beyond the scope of this paper. Specifically, beyond typical universal function approximation results, we need:

\begin{itemize}
\item Simultaneous control over the function values (all $\dout$ coordinates at all $|\Xcal|$) and Jacobians (all $\din \times \dout$ partial derivatives at all $|\Xcal|$ points), so that the gradient signal from SGD can propagate through the recurrent computations to the memory layer in a controlled way.
\item A bound on the number of distinct values the weights can take, so that we can analyze the probability that an i.i.d. random initialization scheme finds the desired weights.
\end{itemize}

To simplify notation, we will use a single matrix parameter $W^{\dout \times (\din + 1)}$ to parameterize an affine map $x \rightarrow W[x^\top ~ 1]^\top$. Furthermore, we will overload notation and use the notation $W : \RR^{\din} \rightarrow \RR^{\dout}$ to represent the same affine map.

\newcommand{\unique}{\mathsf{unique}}

\begin{lemma}
\label{lem:fn-approx-with-gradient}
Let $\Xcal \subset \RR^\din, \Ycal \subset \RR^\dout, \Gcal \subset \RR$ be such that $|\Xcal|, |\Ycal|, |\Gcal| < \infty$. Let $f : \Xcal \rightarrow \Ycal$, and let $g : \Xcal \times [\dout] \times [\din] \rightarrow \Gcal$. Then, there exists a 3-layer ReLU network (letting $\sigma$ denote the entrywise ReLU function), with fully-connected layers specifying affine functions $W_1 :  \RR^\din \rightarrow \RR^{d_1}, W_2 : \RR^{d_1} \rightarrow \RR^{d_2}, W_3 : \RR^{d_2} \rightarrow \RR^{\dout}$, such that:
\begin{equation}
\label{eq:fn-approx-f}
\fnet(x) := (W_3 \circ \sigma \circ W_2 \circ \sigma \circ W_1)(x) = f(x), \quad \forall x \in \Xcal;
\end{equation}
\begin{equation}
\label{eq:fn-approx-g}
\frac{\partial}{\partial x_j} (W_3 \circ \sigma \circ W_2 \circ \sigma \circ W_1)(x)_i = g(x,i,j), \quad \forall x \in \Xcal, i \in [\dout], j \in [\din].
\end{equation}
The intermediate dimensions satisfy
\begin{equation}
\label{eq:fn-approx-dims}
d_1 = 10 \din |\Xcal|, \quad d_2 = 6 \din \dout |\Xcal|.
\end{equation}
\end{lemma}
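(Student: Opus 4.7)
The plan is to construct $\fnet$ as a sum of local gadgets, one per $x^* \in \Xcal$, whose value and Jacobian contributions agree with $(f(x^*),g(x^*,\cdot,\cdot))$ near $x^*$ and vanish (with zero gradient) near every other point of $\Xcal$. Since $\Xcal$ is finite, I first pick a scale $\Delta > 0$ small enough that the $\ell_\infty$-balls $B_\infty(x^*, \Delta)$ for $x^* \in \Xcal$ are pairwise disjoint, e.g.\ $\Delta := \tfrac{1}{4}\min_{x^* \neq x'} \|x^* - x'\|_\infty$.

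For the first hidden layer, for each $(x^*, j) \in \Xcal \times [\din]$ I place four ReLUs with pre-activations $(x_j - x^*_j) + c$ for $c \in \{\pm \Delta, \pm \Delta/2\}$. Suitable linear combinations of these four units produce two one-dimensional piecewise-linear functions of $x_j$: a trapezoidal bump $\zeta_{x^*, j}$ equal to $1$ on $|x_j - x^*_j| \le \Delta/2$ and vanishing outside $|x_j - x^*_j| \le \Delta$, and a clipped linear $\xi_{x^*, j}$ equal to $x_j - x^*_j$ on $|x_j - x^*_j| \le \Delta/2$ and vanishing outside $|x_j - x^*_j| \le \Delta$. With generous padding this comfortably fits in $10\din|\Xcal|$ units. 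In the second hidden layer, I form the auxiliary linear combination $\nu_{x^*}(x) := \sum_{k=1}^{\din} \zeta_{x^*, k}(x_k) - \din + 1$ and place the ReLUs realizing the multivariate plateau $\mu_{x^*}(x) := 2\sigma(\nu_{x^*}(x) - \tfrac{1}{2})$ together with, for each $j$, the gated linear gadget $\rho_{x^*, j}(x) := \sigma(\xi_{x^*, j}(x_j) + \nu_{x^*}(x) - 1) - \sigma(-\xi_{x^*, j}(x_j) + \nu_{x^*}(x) - 1)$. Per $x^*$ this uses $1 + 2\din$ second-layer units, well inside the $6\din\dout|\Xcal|$ budget. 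Finally, the output layer assembles
\[
\fnet(x)_i \;:=\; \sum_{x^* \in \Xcal}\Bigl[f(x^*)_i \cdot \mu_{x^*}(x) \;+\; \sum_{j=1}^{\din} g(x^*,i,j)\cdot \rho_{x^*, j}(x)\Bigr].
\]

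Verification is a two-case local analysis. On $B_\infty(x^*, \Delta/2)$ all $\zeta_{x^*, k} \equiv 1$, so $\nu_{x^*} \equiv 1$, $\mu_{x^*} \equiv 1$, and $\rho_{x^*, j}(x)$ collapses via $\sigma(t) - \sigma(-t) = t$ to the affine map $x_j - x^*_j$; reading off the value and partial derivatives at $x^*$ yields $\fnet(x^*)_i = f(x^*)_i$ and $\partial_k \fnet(x^*)_i = g(x^*, i, k)$. For any other $x' \in \Xcal$, the separation condition forces some $\zeta_{x^*, k^*} \equiv 0$ in a neighborhood of $x'$, dragging $\nu_{x^*}$ down to $\le 0$ there; combined with the offsets $\tfrac{1}{2}$ and $1$ and the bound $\Delta < 2$, every relevant pre-activation is strictly negative on a whole neighborhood of $x'$, so $\mu_{x^*}$ and every $\rho_{x^*, j}$ vanish identically on that neighborhood, contributing nothing to either the value or the gradient of $\fnet$ at $x'$.

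The main technical obstacle is making sure partial derivatives are not only correct but also well-defined at each $x \in \Xcal$, despite the network being only piecewise-linear. The offsets $-\tfrac{1}{2}$ in $\mu_{x^*}$ and $-1$ in $\rho_{x^*, j}$ are chosen precisely so that every ReLU in an ``other-center'' neighborhood has a strictly negative pre-activation, hence is locally constant at zero. At the ``own-center'' point $x^*$, the two ReLUs inside $\rho_{x^*, j}$ each sit exactly at a kink, but their difference collapses to the smooth function $x_j - x^*_j$ via the identity $\sigma(t) - \sigma(-t) = t$. Once this local affine-or-constant structure at each $x \in \Xcal$ is established, the Jacobian computation reduces to a routine chain rule, delivering both (\ref{eq:fn-approx-f}) and (\ref{eq:fn-approx-g}) simultaneously.
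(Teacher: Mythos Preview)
Your construction is correct and proves the lemma, but via a genuinely different decomposition than the paper's. The paper encodes the desired partial derivative into the slope of its univariate bump $\psi_{x_0}^{(\gamma)}$ (with $\gamma\in\{0,1\}$), sums $\din$ such bumps, and composes with a second-layer threshold $\tau$ whose active-region slope carries the target value in $\Gcal$; it then instantiates $\din$ redundant indicator towers per $(x_0,j)$ so that each input coordinate's gradient is counted exactly once. You instead decouple the value from the Jacobian: a single multivariate plateau $\mu_{x^*}$ carries $f(x^*)$, separate gated-linear gadgets $\rho_{x^*,j}$ supply the partial derivatives via the identity $\sigma(t)-\sigma(-t)=t$, and the coefficients $f(x^*)_i$ and $g(x^*,i,j)$ are placed directly in the output layer $W_3$. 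Your route is tidier and uses noticeably fewer hidden units. The trade-off is that your two $\rho$-ReLUs sit exactly at a kink when evaluated at $x=x^*$; the \emph{function} $\fnet$ is still locally affine there (so the mathematical derivative in \eqref{eq:fn-approx-g} is well-defined and correct, and the lemma holds), but automatic differentiation with either standard convention $\sigma'(0)\in\{0,1\}$ would return $0$ or $2$ instead of $1$ for $\partial_j\rho_{x^*,j}$. The paper's construction deliberately keeps every ReLU pre-activation strictly nonzero at each $x\in\Xcal$ (cf.\ the footnote in Section~\ref{sec:prelims}), precisely because the network is later trained by SGD via backprop---so while your proof of the lemma is valid, your network would need a small tweak (e.g., shifting the two $\rho$ pre-activations by $\pm\epsilon$ to move the kinks off $x^*$) before it could be used as a drop-in replacement in the paper's pipeline.
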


\begin{proof}
The construction consists of 3 steps:
\begin{enumerate}
    \item[(i)] Build a univariate indicator (a bump function $\psi_{x_0} : \RR \rightarrow \RR$) for each input coordinate's domain $\Xcal_i$, whose gradient is always $1$ in the ``bump'' region, using a linear combination of $5$ ReLU activations.
    \item[(ii)] For any $x \in \Xcal$ and $y, \gamma \in \RR$, build multivariate indicators $\psi_{x_0,y}^{(\gamma)}: \RR^{\din} \rightarrow \RR$, using a linear combination of $3$ univariate indicators. We ensure that $\psi_{x_0,y}^{(\gamma)}$ has gradient $\gamma e_i$.
    \item[(iii)] Assemble the function $f$ piece-by-piece: sum over indicators for each $(x, f(x), i \in [\dout])$, using the appropriate indicators to match every desired partial derivative $g(x, i, j)$.
\end{enumerate}

We will implement all of the indicators from part (i) using linear combinations of ReLU activations $W_1 \circ \sigma \circ W_1'$, then the sum the indicators from part (ii) using $W_2' \circ \sigma \circ W_3$. The final network will simply compress the two intermediate affine functions into one: $W_2 = W_1' \circ W_2'$.

\paragraph{Part (i).} Let $\Delta \leq \frac{1}{10} \min\rbr{ 1, \min_{x,x' \in \Xcal, i \in [\din]} |x_i - x'_i| }$. For a given $x_0 \in \RR$ and desired gradient $\gamma \in \{0, 1\}$, let $\psi_{x_0}^{(\gamma)}$ denote the unique continuous piecewise linear function $\psi : \RR \rightarrow \RR$ such that:
\begin{itemize}
    \item $\psi(x) = 0$ for $x \in (-\infty, x_0 - 2\Delta]$ and $x \in [x_0 + 2\Delta, \infty)$.
    \item $\psi(x_0) = 1$, and $\psi'(x) = \gamma$ for $x \in (x_0 - \Delta, x_0 + \Delta)$.
    \item $\psi'(x)$ is constant on $[x_0 - 2\Delta, x_0 - \Delta]$ and $[x_0 + \Delta, x_0 + 2\Delta]$.
\end{itemize}
By our choice of $\Delta$, for each $x_0, x \in \cup_{i \in [\din]} \Xcal_i$, $\psi_{x_0}^{(\gamma)}(x) = \mathbbm{1}[x = x_0]$. In other words, $\psi$ is an indicator for a unique real number that appears in any coordinate of $\Xcal$.
Furthermore, $\psi_{x_0}^{(\gamma)}{}'(x) = \gamma \times \mathbbm{1}[x = x_0]$. When $\gamma = 1$, $\psi$ lets the gradient pass through in the active region of the indicator; when $\gamma = 0$, $\psi$ blocks the gradient at every input in the domain.

Furthermore, $\psi$ consists of 5 linear regions, so it can be written as an affine function (i.e. linear combination, plus constant bias term) of $5$ ReLU activations $W \circ \sigma \circ W'$, where $W : \RR \rightarrow \RR^5, W' : \RR^5 \rightarrow \RR$.\footnote{A sketch of this construction: for each discontinuity $\zeta$ of the desired piecewise linear function $\psi$, place a ReLU activation $\sigma(x - \zeta)$; also, for a value $\zeta_0$ less than all $\zeta$, place one more ReLU activation $\sigma(x - \zeta_0)$. Solve a linear system in the coefficients to get each linear region to agree with $\psi$.} We construct $W_1, W_1'$ by concatenating the indicators $\psi_{x_0}^{(\gamma)}(e_i^\top x)$ for each $i\in[\din], \gamma \in \{0, 1\}, x_0 \in \Xcal_i$, so that $W_1' : \RR^{d_1} \rightarrow \RR^{d_1'}$, where $d_1' = 2 \din |\Xcal|$, the number of indicators we have constructed.

\paragraph{Part (ii).} We will build a multivariate indicator $\Psi$ by summing per-coordinate univariate indicators, and checking that they sum to $\din$. For a desired output $y$ and partial derivative $\gamma \in 
\Gcal$, let $\tau^{(\gamma)}_y$ denote the unique piecewise linear function $\tau : \RR \rightarrow \RR$ such that:
\begin{itemize}
    \item $\tau(x) = 0$ for $x \in (-\infty, \din-2\Delta]$.
    \item $\tau(\din) = y$, and $\tau'(x) = \gamma$ for $x \in [\din-\Delta, \infty)$.
    \item $\tau'(x)$ is constant on $(\din-2\Delta, \din-\Delta)$.
\end{itemize}
By our choice of $\Delta$, the output of this function is $f(x_0)$ only when each summand is $1$, which only happens when $x = x_0$. Otherwise, the input to $\tau$ is in the flat region, where $\tau(\cdot) = \tau'(\cdot) = 0$.
In summary, $\tau$ is an indicator like $\psi$. It is slightly simpler to construct, since it only needs to implement a threshold function, and only needs to recognize that its input is $\din$ (rather than an arbitrary $x_0$). Since each $\tau$ has 3 linear regions, it can be built with an affine function of $3$ ReLU activations.

Our network will construct $d_2' = 2 \din \dout |\Xcal|$ of these indicators:
\[ \tau_y^{g(x_0, i, j)} \quad \forall x_0 \in \Xcal, i \in [\din], j \in [\dout], y \in \{0, f(x_0)_j\}. \]
We set $W_2'$ to be the concatenation of all of these indicators, so that $W_2' : \RR^{d_2} \rightarrow \RR^{d_2'}$.

\paragraph{Part (iii).} 
Then, for each $x_0 \in \Xcal$ and $j \in [\dout]$, with corresponding desired function values $f(x_0)_i$ and partial derivatives $g(x_0, i, j)$, we construct the indicator for a single output coordinate:
\begin{equation}
\label{eq:horrible-mlp}
\Psi_{f,g,x_0,j}(x) := \sum_{i'=1}^{\din} \underbrace{ \tau^{(g(x_0, i', j) )}_{f(x_0)_j \cdot \mathbbm{1}[i' = 1] }\rbr{ \sum_{i=1}^\din \underbrace{\psi_{(x_0)_i}^{(\mathbbm{1}[i = i'])}(e_i^\top x)}_{\text{computed by $W_1 \circ \sigma \circ W_1'$}} } }_{\text{computed by $W_1 \circ \sigma \circ W_2 \circ \sigma \circ W_2'$}}
\end{equation}
The intuition behind Equation~\ref{eq:horrible-mlp} is the following: if we only cared about matching the function value $f$ at all points in $\Xcal$, it would suffice to use one indicator per $x_0 \in \Xcal, j \in [\dout]$. However, we need to get every coordinate of the gradient correct. To implement this, we create $\din$ redundant indicators for each $x_0$, and sum over all of them, ensuring that the gradient is counted once per input coordinate, and the function value is counted once in total.
Finally, $W_3$ is constructed by summing over all of the indicators from the previous part:
\[ \fnet(x)_j = \sum_{x \in \Xcal} \Psi_{f,g,x_0,j}(x), \quad \forall j \in [\dout]. \]

Lines~\ref{eq:fn-approx-f} and \ref{eq:fn-approx-g} in the statement follow from the inline discussions above of properties of the indicator modules.
\end{proof}

Next, we will bound the size of the support of i.i.d. random initialization weights needed to construct the network in Lemma~\ref{lem:fn-approx-with-gradient}. To do this with fewer distinct values, we will make the following changes to the architecture:
\begin{itemize}
    \item Split the layer $W_2$ into the composition of two affine layers $W_1' \circ W_1^{+}$, with intermediate dimension $d_1'$, according to the above analysis.
    \item Similarly, split the layer $W_3$ into $W_2' \circ W_2^+$, with intermediate dimension $d_2'$.
\end{itemize}
Overall, this expands the 3-layer 2-ReLU architecture to a 5-layer 2-ReLU equivalent, changing the parameterization but not the class of representable functions.
With this modified construction, we show that the unique nonzero matrix weights for the linear layers $W \in \{W_1, W_1', W_1^+, W_2', W_2^+\}$ lie in a bounded-size domain $\Ucal(W)$, which only depends on $\Xcal, \Ycal, \Gcal$, not $f, g$. These $\Ucal$ allow us to define the support of the random initialization distribution, and determine the probability of success. We state and prove the bounds on $|\Ucal(\cdot)|$ below.
\begin{lemma}
\label{lem:unique}
Let $W_1(f,g), W_1'(f,g), \ldots$ be the matrices arising from the modified construction from Lemma~\ref{lem:fn-approx-with-gradient}. Then, there exist finite sets $\Ucal(W) \subset \RR$ depending only on $\Xcal, \Ycal, \Gcal$, such that $\Ucal(W) \cup \{0\}$ contains all elements $W(f,g)$, and:
\begin{itemize}
    \item $|\Ucal(W_1)| \leq 4|\Xcal_i| + 2$.
    \item $|\Ucal(W_1')| \leq 12|\Xcal_i|$.
    \item $|\Ucal(W_1^+)| \leq 4$.
    \item $|\Ucal(W_2')| \leq 4|\Gcal| \, |\Ycal_i|$,
    \item $|\Ucal(W_2^+)| = 1$.
\end{itemize}
where $\Xcal_i := \{x_i : x \in \Xcal, i \in \din\}$
and $\Ycal_i := \{x_i : x \in \Ycal, i \in \dout\}$
denote the sets of possible input and output values.
\end{lemma}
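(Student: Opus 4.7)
The plan is a case analysis on each of the five weight matrices $W_1, W_1^+, W_1', W_2^+, W_2'$ in the 5-layer decomposition of the network constructed in Lemma~\ref{lem:fn-approx-with-gradient}. For each matrix $W$, I will (i) identify which affine step of the construction it realises, (ii) enumerate the scalars that can appear as its nonzero entries, and (iii) verify that this enumeration depends only on $\Xcal, \Ycal, \Gcal$---in particular, that it is independent of the specific functions $f, g$.

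Before the case analysis, I would fix the auxiliary constants $\Delta$ (as a function of $\Xcal$, exactly as in Lemma~\ref{lem:fn-approx-with-gradient}) and $\zeta_0$ (a single universal anchor strictly smaller than every ReLU knot that ever arises). This pins every breakpoint of the building-block indicators $\psi_{x_0}^{(\gamma)}$ (parameterised by $x_0 \in \Xcal_i, \gamma \in \{0,1\}$) and $\tau_y^{(\gamma)}$ (parameterised by $y \in \{0\} \cup \Ycal_i, \gamma \in \Gcal$) to a function of $\Xcal, \Ycal, \Gcal$ alone. Solving the associated piecewise-linear interpolation systems, the $\psi$- and $\tau$-interpolation coefficients themselves then become expressible purely in terms of $\Delta$, $\gamma$, and $y$, and are in particular independent of $f, g$.

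The easy cases go as follows. For $W_1$, each row contains a single coordinate-selector weight equal to $1$ plus a bias equal to minus one of the knots in $\{\zeta_0\} \cup \{x_0 \pm k\Delta : x_0 \in \Xcal_i,\ k \in \{1,2\}\}$, giving at most $1 + 1 + 4|\Xcal_i| = 4|\Xcal_i| + 2$ distinct nonzero values. For $W_1^+$, each row collates five raw ReLUs into a single $\psi$-output using the $\psi$-interpolation coefficients, which are functions of $1/\Delta$ and $\gamma \in \{0,1\}$ only; with a canonical choice of $\Delta$ they reduce to at most four distinct nonzero values across the two values of $\gamma$. For $W_2^+$, the three $\tau$-ReLU combining coefficients can be absorbed into the downstream matrix $W_2'$, leaving $W_2^+$ as a $\{0,1\}$-summation matrix whose only nonzero entry is $1$. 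For $W_2'$, the nonzero entries are products of the $\tau$-interpolation coefficients with the $\pm 1$ aggregation signs; these take the form $\pm(y/\Delta - \gamma)$ and $\pm(2\gamma - y/\Delta)$ for $y \in \{0\} \cup \Ycal_i$ and $\gamma \in \Gcal$, so after deduplication there are at most $4|\Gcal|\,|\Ycal_i|$ distinct nonzero values.

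The main obstacle is $W_1'$, which maps the $\psi$-outputs into the second ReLU layer's inputs. Each of its rows encodes the operation ``select the $\din$ $\psi$-outputs matching a coordinate-wise assignment $(x_0)_i \in \Xcal_i$ together with the gradient-flag $\mathbbm{1}[i=i']$, combine them linearly using the $\psi$-interpolation coefficients to form the $\tau$-input, and shift by one of the three $\tau$-knots''. Enumerating the resulting row entries across all $(x_0, i', k)$ indexing, one counts at most three $\tau$-knot shifts times four $\psi$-breakpoint contributions per coordinate value $(x_0)_i$, and the coordinate value itself ranges over $|\Xcal_i|$ possibilities---yielding the bound $|\Ucal(W_1')| \leq 12|\Xcal_i|$. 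The key invariant underlying all five cases is that no entry of any $W$ depends on $f$ or $g$ except through their images, which lie in $\Ycal$ and $\Gcal$ respectively, so each enumerated set $\Ucal(W)$ indeed depends only on $\Xcal, \Ycal, \Gcal$ as claimed.
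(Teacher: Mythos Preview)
Your overall strategy---a layer-by-layer enumeration of the nonzero scalars after fixing $\Delta$ and $\zeta_0$---is exactly what the paper does. The gap is that you have the roles of $W_1'$ and $W_1^+$ reversed relative to the paper's construction, and your bounds do not survive the swap.

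In the paper's modified 5-layer factorisation, $W_1'$ is the map that turns each group of 5 raw ReLU outputs into a single $\psi$-indicator value (6 coefficients per indicator, two choices of $\gamma$ per $x_0$, hence the bound $12|\Xcal_i|$), while $W_1^+$ is the map that \emph{sums} the already-formed $\psi$-outputs and shifts by one of the three $\tau$-knots $-\zeta_0, -(\din-2\Delta), -(\din-\Delta)$ (so the only nonzero entries are the weight $1$ and three biases: four values). You have described these in the opposite order. Under your assignment, the claim ``$|\Ucal(W_1^+)|\le 4$'' would require the ReLU-to-$\psi$ slope-change coefficients to collapse to four values; but across $\gamma\in\{0,1\}$ those coefficients are $\{\pm 1/\Delta\}\cup\{1/\Delta-1,\,2-1/\Delta,\,-1/\Delta-2,\,1/\Delta+1\}$, which for any small $\Delta$ gives six distinct nonzero values, not four, so the ``canonical $\Delta$'' escape does not work. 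Conversely, your analysis of $W_1'$ (your $\psi$-to-$\tau$-input step) mistakenly re-introduces the $\psi$-interpolation coefficients and multiplies by $|\Xcal_i|$, when in fact that step is a pure $\{0,1\}$-summation plus a bias from a three-element set---so the genuine bound there is $4$, not $12|\Xcal_i|$.

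A smaller slip: you call $W_2'$ ``downstream'' of $W_2^+$, but in the paper's order $W_2'$ forms the $\tau$-indicators from the second bank of ReLUs and $W_2^+$ is the final $\{0,1\}$-summation, so $W_2'$ is upstream. Your conclusions for $W_2'$ and $W_2^+$ happen to land on the right bounds, but the argument should be cleaned up to match the actual data flow.
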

\begin{proof}
First, notice that with a known, finite $\Xcal$, there is a deterministic way to choose a sufficiently small $\Delta$, and a sufficiently large $\zeta_0$ (the lowest bias in the ReLU-to-piecewise linear constructions). We analyze the construction of each layer, and enumerate the possible nonzero weights and biases:
\begin{itemize}
    \item It is clear from the construction that the linear coefficients in $W_1$ are in $\{0, 1\}$. Furthermore, for each $x_0$ occurring in any coordinate of an element in the domain $\Xcal$, there are 5 bias terms: $-\zeta_0, -x_0 \pm \Delta, -x_0 \pm 2\Delta$.
    \item $W_1'$ maps groups of 5 ReLU activations to the corresponding indicators $\psi$. The coefficients are each functions of a single $x_0$ and $\Delta$; there are 6 coefficients (including one bias) per indicator, and 2 indicators ($\gamma \in \{0, 1\}$) per $x_0$.
    \item $W_1^+$ combines the indicators $\psi$ to form the inputs to the ReLUs, which $W_2'$ will use to form the indicators $\tau$. These weights are again in $\{0,1\}$, and biases are from the ReLU discontinuity locations: $-\zeta_0, -(\din-2\Delta), -(\din-\Delta)$.
    \item $W_2'$ forms the $d_2'$ indicators $\tau$, with 4 coefficients per indicator, depending on $\Gcal, \Ycal, \Delta, \din$.
    \item $W_2^+$ simply takes a summation over the indicators $\tau$, so its coefficients are in $\{0, 1\}$.
\end{itemize}
\end{proof}

We summarize the results in Lemma~\ref{lem:unique} with a looser corollary:
\begin{corollary}
\label{cor:unique}
Let $\Xcal, \Ycal, \Gcal$ be known and finite, and let $\Xcal_i = \Ycal_i$, $\Gcal = \{0, 1\}$. Using the construction of $\Ucal(\cdot)$ in Lemma~\ref{lem:unique}, we can define a single set
\[\Ucal := \{0\} \bigcup_{W \in \{W_1, W_1', W_1^+, W_2', W_2^+\}} \Ucal(W), \]
which contains all possible weights and biases in all layers of the ReLU network.
The cardinality of $\Ucal$ satisfies
\[|\Ucal| \leq 24 |\Xcal_i| + 5.\]
\end{corollary}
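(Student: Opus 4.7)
The plan is to invoke Lemma \ref{lem:unique} once per layer, substitute the two hypotheses $|\Gcal|=2$ and $\Xcal_i=\Ycal_i$, and then take a union bound with a small amount of overlap accounting to recover the tight additive constant.

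First, I would instantiate the per-layer bounds: Lemma \ref{lem:unique} gives $|\Ucal(W_1)|\leq 4|\Xcal_i|+2$, $|\Ucal(W_1')|\leq 12|\Xcal_i|$, $|\Ucal(W_1^+)|\leq 4$, $|\Ucal(W_2')|\leq 4|\Gcal|\,|\Ycal_i|=8|\Xcal_i|$, and $|\Ucal(W_2^+)|\leq 1$. Adding the singleton $\{0\}$ and summing, a naive union bound already gives $|\Ucal|\leq 24|\Xcal_i|+8$, which is of the form stated in the corollary up to a constant additive slack of $3$.

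To sharpen this to $24|\Xcal_i|+5$, I would inspect the explicit descriptions of each $\Ucal(W)$ given in the proof of Lemma \ref{lem:unique} and identify the universal (i.e. $x_0$-independent) elements that are common to more than one set. The coefficient $1$ appears simultaneously in $\Ucal(W_1)$, $\Ucal(W_1^+)$, and $\Ucal(W_2^+)$, so counting it once in the union rather than three times saves two elements; and the bias $-\zeta_0$ appears in both $\Ucal(W_1)$ and $\Ucal(W_1^+)$, saving one more. These three savings take the naive sum $24|\Xcal_i|+8$ down to the claimed bound.

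There is no real obstacle here beyond the bookkeeping: the one thing to double-check is that no \emph{further} implicit collapse is being used among the remaining parameters (the $x_0$-dependent biases in $\Ucal(W_1)$, the $\Delta$-dependent coefficients in $\Ucal(W_1')$ and $\Ucal(W_2')$, and the $\din$-dependent biases in $\Ucal(W_1^+)$), so that the stated constant remains valid for the generic choice of $\Delta$ and $\zeta_0$ already made in Lemma \ref{lem:fn-approx-with-gradient}. Once one verifies that these remaining values are pairwise distinct from each other and from the shared constants $\{0,1,-\zeta_0\}$, the union bound yields the corollary.
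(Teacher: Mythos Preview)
Your proposal is correct and follows the only natural route: sum the per-layer bounds from Lemma~\ref{lem:unique} under the hypotheses $|\Gcal|=2$, $\Xcal_i=\Ycal_i$, then trim the union using the explicit descriptions of the $\Ucal(W)$ in that lemma's proof. The paper itself gives no argument for the corollary beyond labeling it a ``looser'' summary of Lemma~\ref{lem:unique}, so your overlap bookkeeping (the shared constants $1$ and $-\zeta_0$) is exactly the detail needed to reconcile the naive sum $24|\Xcal_i|+8$ with the stated $24|\Xcal_i|+5$; nothing further is required.
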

Given this corollary, for our setting, $\mathcal{U}_s$ has size $24 \cdot 16 \cdot s + 5$, and is described above, can be constructed with knowledge of only $s$.

\subsection{Proof of Corollary \ref{cor:glue}}
Let $M$ be the number of random restarts and $N$ be the size of the validations set, Using standard concentration bounds, we know that with probability $1-\delta/2$, the expected error $\err_\mathcal{D}$ of classifiers for all $f_1, \ldots, f_M$ over the distributions is within $\sqrt{\frac{\log(2M/\delta)}{2N}}$ of the empirical error $\hat{\err}_\mathcal{D}$. This implies that the error of the classifier $\hat{f}$ selected by our validation set satisfies
\begin{align*}
    \err_\mathcal{D}(\hat{f}) & \le  \hat{\err}_\mathcal{D}(\hat{f}) + \sqrt{\frac{\log(2M/\delta)}{2N}}\\
    &\le \min_{i \in [M]}\hat{\err}_\mathcal{D}(f_i) + \sqrt{\frac{\log(2M/\delta)}{2N}}\\
    &\le \min_{i \in [M]}\err_\mathcal{D}(f_i) + 2 \sqrt{\frac{\log(2M/\delta)}{2N}}
\end{align*}
By Theorem \ref{th:main}, we know that with probability $1 - Ms^{c_2s^2}$, at least one (say $f_1$) of the $f_1, \ldots, f_M$ will satisfy the $\err_\mathcal{D}(f_1) \le \err_\mathcal{D}(A, \mathcal{S})$. This gives us, with probability $1- \delta/2 - Ms^{c_2s^2}$, we have,
\[
 \err_\mathcal{D}(\hat{f}) \le \err_\mathcal{D}(A, \mathcal{S}) + 2 \sqrt{\frac{\log(2M/\delta)}{2N}}.
\]
Setting $M,N$ such that $2 \sqrt{\frac{\log(2M/\delta)}{2N}} = \eps$ and $Ms^{-c_2s^2} = \delta/2$, we get the desired result. $M = \delta s^{c_2s^2}/2$ and $N = \frac{c_2s^2 \log (s/2)}{8\eps^2}$.

\end{document}